\DeclarePairedDelimiter\paren{\lparen}{\rparen}
\DeclarePairedDelimiter\abs{|}{|}
\newcommand{\ignore}[1]{}
\newcommand{\F}{{\cal F}}
\newcommand{\G}{{\cal G}}
\newcommand{\Q}{{\cal Q}}
\newcommand{\X}{{\cal X}}
\newcommand{\R}{{\mathbb R}}
\newcommand{\tv}{\mathsf{TV}}
\newcommand{\N}{\mathbb{N}}
\newcommand{\D}{\mathcal{D}}
\newcommand{\Ex}{\mathbb{E}}
\newcommand{\eps}{\epsilon}
\newcommand{\opt}{\mathsf{opt}}
\newcommand{\conv}{\mathsf{conv}}
\newcommand{\poly}{\mathsf{poly}}
\newtheorem{theorem}{Theorem}
\newtheorem{lemma}[theorem]{Lemma}
\newtheorem{claim}[theorem]{Claim}
\newtheorem{definition}[theorem]{Definition}
\newtheorem{corollary}[theorem]{Corollary}
\newtheorem{remark}{Remark}
\title{The Optimal Approximation Factor in Density Estimation}
\author{
Olivier Bousquet\thanks{Google Brain, Z{\"u}rich. {\tt obousquet@google.com.}}
\and  Daniel M. Kane\thanks{Department of Computer Science and Engineering/Department of Mathematics, University of California, San Diego. {\tt dakane@ucsd.edu.}}
\and Shay Moran\thanks{Department of Computer Science, Princeton University. {\tt  shaymoran1@gmail.com.}}
}
\begin{document}

\maketitle

\begin{abstract}
Consider the following problem: given two arbitrary densities $q_1,q_2$
	and a sample-access to an unknown target density $p$,
	find which of the $q_i$'s is closer to $p$ in total variation.

A remarkable result due to Yatracos shows that this problem is tractable in the following sense:
	there exists an algorithm that uses  $O(\epsilon^{-2})$ samples from $p$ and outputs~$q_i$
	such that with high probability, $TV(q_i,p) \leq 3\cdot\opt + \epsilon$,
	where $\opt= \min\{TV(q_1,p),TV(q_2,p)\}$.
Moreover, this result extends to any finite class of densities $\Q$:
	there exists an algorithm that outputs the best density in $\Q$
	up to a multiplicative approximation factor of 3.

We complement and extend this result by showing that:
	(i) the factor 3 can not be improved if one restricts the algorithm to output a density from $\Q$, and
	(ii) if one allows the algorithm to output arbitrary densities (e.g.\ a mixture of densities from $\Q$), 
	then the approximation factor can be reduced to 2, which is optimal.
         In particular this demonstrates an advantage of improper learning over proper in this setup.

We develop two approaches to achieve the optimal approximation factor of $2$: an adaptive one and a static one.
	Both approaches are based on a geometric point of view of the problem
	and rely on estimating surrogate metrics to the total variation.
	Our sample complexity bounds exploit techniques from {\it Adaptive Data Analysis}.

\end{abstract}

\section{Introduction}

We study the problem of agnostic distribution learning whereby a learner is given i.i.d.\ samples from an {\em unknown} distribution $p$  and needs to choose, among a set $\Q$ of candidate distributions, the one that is closest to $p$.
This problem formulation immediately raises several questions. The first one is how to define close-ness between probability distributions. Here we will argue that the total variation metric is a natural choice. The second one is what assumptions are made on $p$.
We choose the so called agnostic or robust case which means that we are not making any assumption.
The last one is whether the best thing to do for the learner is to return an element of $\Q$ (this is called the proper case), or to possibly produce a distribution which is not a member of $\Q$ (this is the improper case) but is guaranteed to be competitive with respect to the best member of~$\Q$.

Our study will focus on the information-theoretic limits of the problem, which means that we will not be concerned with the computational complexity of the learner and will only consider what, in theory, is the best achievable performance of a learner as a function of the size of the candidate class $\Q$ and the number $m$ of samples from $p$ that it has access to.

%\subsection{Distribution learning as a fundamental learning problem}
%Relation to model selection (the finite case)
%ordinary problem in science // we really consider a few candidates over a large domain (possibly infinite) // other %situation: infinite class and then epsilon-net. In the first case, ok to be linear in the size of the class, in the second %case you want to be logarithmic in the size of the class // we have results for both (but possibly very far from optimal %-- and computational complexity is really bad for the adaptive (O(X)))

\subsection{Why Total Variation?}
The total variation metric, defined for two probability measures $p,q$ on $\X$ as
\begin{equation}\label{eq:tv}
TV(p,q):=\sup_{A\subset \X}\left|p(A)-q(A)\right|\,,
\end{equation}
has the nice property of being a proper metric. Additionally it has the natural interpretation of measuring the largest discrepancy in the measure assigned to the same event by the two different measures. And while it thus looks like an $L_\infty$ metric (when viewing a probability measure as a map from subsets of $\X$ to $[0,1]$),
it also can be rewritten as an $L_1$ norm: if $p$ and $q$ have densities
$dp$ and $dq$ respectively (or probability mass function when $\X$ is finite/countable),
\begin{equation}\label{eq:tv}
TV(p,q)=\frac{1}{2}\left\|dp-dq\right\|_1\,,
\end{equation}
as well as an optimal coupling:
\begin{equation}\label{eq:tv}
TV(p,q)=\inf_{(Y,Z):Y\sim p, Z\sim q} \mathbb{P}(Y\neq Z)\,.
\end{equation}

Note that there is a large literature about density estimation in the $L_2$ metric (as opposed to $L_1$). However, $L_2$ is a less natural way of measuring the distance between densities because it lacks invariance with respect to the choice of the reference measure on the domain. This may not be an issue when considering real-valued distributions where the Lebesgue measure is the canonical choice, but when working on high-dimensional or general domains, this dependency is not necesssarily desirable (for more details, see Chapter 6.5 in the book by~\cite{Devroye01combinatorial}).

Another classical choice is to use the Kullback-Leibler divergence, however $KL(q,p)$ has the down-side of being defined only when $q$ is absolutely continuous with respect to $p$ and in a setting like the one we are considering where we do not wish to assume anything about the target distribution, this cannot be guaranteed. Even if one were to consider $KL(p,q)$ instead, then one would be restricted to considering models that put mass on all points of the domain and the Kullback-Leibler distance could be dominated by the points of very low $q$ probability.

Compared to those other two choices, total variation has the benefit of being invariant, bounded and being a metric.
We refer the reader to Chapter 6 in the book by~\cite{Devroye01combinatorial}
for a discussion regarding the advantages of total variation and a detailed comparison with other natural similarity measures.

Of course, there are other possible choices such as the Hellinger divergence or others, and it would be an interesting question to extend the current study to those.

\subsection{Why Agnostic?}
A basic classification of machine learning problems separates between {\it realizable} and {\it agnostic} learning.
In the realizable case one assumes that the target distribution $p$ belongs to a prespecified class $\Q$ which is known to the algorithm,
and in the agnostic case one usually does not assume anything about the target distribution $p$ but rather extends
the goal of learning to so that the output distribution $q$ is competitive with the best distribution in $\Q$ (i.e.\ the one which is closest to $p$).

In this work we focus on the agnostic case.
Nevertheless,  a sensible\footnote{This is due to the lower bound of $2\opt$ (and $3\opt$ in the proper case),
see \cref{sec:mainresults}.} setting to keep in mind is the ``almost realizable''
case in which the distance between $p$ and $\Q$ is small.
Such scenarios may occur in contexts where one has a strong prior about the target distribution,
but would like to remain resilient/robust against small fluctuations and thus to avoid realizability assumptions.
%A basic classification of machine learning problems
%Define agnostic versus realizable learnability.
%We don't necessarily want to make assumptions on the true data generation mechanism.
%Sometimes the context suggests a natural class of distributions, but perhaps
%the target distribution does not belong to the class and only close to some distribution in it.
%Therefore, it is sensible to avoid properness assumptions.

\subsection{Why Improper?}
Another basic classification in machine learning problems distinguishes between {\it proper} and {\it improper}
learning. In the proper case the algorithm always outputs a distribution $q\in \Q$ whereas in the improper
case it may output arbitrary distribution (in both cases the goal remains the same, namely to compete with the
best distribution in $\Q$). While at a first glance it may seem strange to consider the improper case,
it turns out that in many cases improperness is beneficial (e.g.\ boosting is inherently improper~\citep{schapire2012boosting}; in multiclass classification some classes can only be learned improperly~\citep{Daniely14proper}). The main results in this paper manifest another setting in which improper learning is provably stronger than proper learning.

\subsection{Is this problem too hard?}
While the total variation is a natural metric with strong guarantees,
	at a first glance it may seem impossible to use in such an abstract distribution learning setting:
	imagine that the class $\Q$ contains just two distributions $q_1,q_2$, and let $p$ denote the target distribution.
	Then, a natural empirical-risk-minimization-like approach would be to estimate both distances $\tv(q_1,p),\tv(q_2,p)$
	from a large enough i.i.d.\ sample drawn from $p$ and output the minimizer.
	The problem with this approach is that estimating $\tv(\cdot,p)$
	requires $\Omega(\lvert \X\rvert)$ samples from $p$ (see e.g.~\cite{Jiao18minimax}).
	In particular, if $\X$ is infinite (say~$\X=\mathbb{R}$) then it is impossible to do it with a finite sample complexity.

However, perhaps surprisingly, despite the impossibility of estimating the total variation
	one can still find an approximate minimizer of it (even when $\X$ is infinite!).
	 A more detailed survey of relevant results is given in \Cref{sec:prevwork} below.

\subsection{Problem Definition}
Let $\X$ be a domain and let $\Delta(\X)$ denote the set of all probability distributions over $\X$.
We assume that either (i) $\X$ is finite in which case $\Delta(\X)$
is identified with the set of $\lvert \X\rvert$-dimensional probability vectors, or
(ii) $\X=\R^d$ in which case $\Delta(\X)$ is the set of Borel probability measures.

%To ease presentation and avoid measure theoretic subtleties
%we assume that $X$ is countable (i.e.\ that every subset of $X$ is measurable).
%We remark however that the statements
%in this paper apply also to continuous domain
%under standard measurability assumption.

Let $\Q\subseteq \Delta(\X)$ be a set of distributions.
We focus on the case where $\Q$ is finite and denote its size by~$n$.
Let~$\alpha > 0$, we say that $\Q$ is {\it $\alpha$-learnable} if there is a {(possibly randomized)} algorithm $A$ such that
for every $\eps,\delta>0$ there is a finite sample complexity bound $m=m(\eps,\delta)$
such that for every target distribution $p\in \Delta(\X)$,
if $A$ receives as input at least $m$ independent samples from $p$
then it outputs a distribution $q$ such that
\[\tv(p,q)\leq \alpha\cdot\opt + \eps,\]
with probability at least $1-\delta$, where $\opt = \min_{q\in \Q}\tv(p,q)$
and $\tv(p,q) = \sup_{A\subseteq \X}\{p(A)-q(A)\}$ is the total variation distance.
We say that $\Q$ is {\it properly $\alpha$-learnable} if it is $\alpha$-learnable
by a proper algorithm; namely an algorithm that always outputs~$q\in \Q$.
The function $m=m(\eps,\delta)$ is called the sample complexity of the algorithm.

\paragraph{Sample complexity.}
	Note that if $\X$ is finite then any class of distribution is $\alpha$-learnable for $\alpha=1$
	with sample complexity $O(\lvert \X\rvert /\eps^2)$
	(because this many samples suffice to estimate $p(A)$ for every $A\subseteq \X$,
	which allows to estimate its total variation distance to each $q\in \Q$).
	{\it Therefore, when $\X$ is finite, we consider $\Q$ to be $\alpha$-learnable only if its sample complexity
	depends efficiently on $\lvert \X\rvert$, namely $\poly\log(\lvert \X\rvert)$}
	(note that $\log\lvert \X\rvert$ is the bit-complexity of representing each sample
	in the input and therefore $\poly\log\lvert \X\rvert$ means polynomial in the input size).

\subsection{Previous Related Work}\label{sec:prevwork}
Density estimation has been studied since more than a century ago,
for textbook introductions see e.g.~\citep{devroye85nonparametric,Devroye01combinatorial,Diakonikolas16survey}.
A significant portion of works considered this problem when $\Q$ is some specific class of distributions
such as mixtures of gaussians (e.g.~\cite{Kalai12mixtures,Diakonikolas17mixtureslower,Diakonikolas18mixtures,Kothari18robust,Ashtiani18mixtures1,Ashtiani18mixtures}), histograms (e.g.~\cite{Pearson95contributions,lugosi96histograms,Devroye04bin,Chan14histograms,Diakonikolas18histograms}), and more. For a fairly recent survey see~\citep{Diakonikolas16survey}.

This work concerns arbitrary classes $\Q$ and the only assumption we make is that $\Q$ is finite.
The factor 3 upper bound in the proper case was derived by \cite{yatracos85}
using the elegant and simple idea of Yatracos' sets (also referred to as Schaffe's sets by~\cite{Devroye01combinatorial}).
\cite{Devroye01combinatorial} extended Yatracos' idea and also gave a factor 2 lower bound for his algorithm.
\cite{Mahalanabis08density} improved the lower bound to 3 and extended it to a more general family of proper algorithms. 
\cite{Mahalanabis08density} also showed that in the case of $n=2$ distributions, the exists a {\it randomized proper} algorithm,
which achieves a factor $2$. approximation
A lower bound of factor 2 for arbitrary (possibly improper) algorithms follows from the work~\cite{Chan14histograms} (see \cref{sec:mainresults}).
\cite{Devroye01combinatorial} point out in their book
the absence of universal methods other than Yatracos' which achieve
a constant approximation factor; this comment inspired the current work.

\subsection{Main Results}\label{sec:mainresults}

\begin{theorem}[Upper bound - improper case]\label{thm:improperub}
Every finite class of distributions $\Q$ is $\alpha$-learnable with $\alpha=2$.
%Let $Q$ be a class of distributions over $X$ with $\lvert Q\rvert = n$.
%Then $Q$ is $\alpha$-learnable with $\alpha=2$
%with sample complexity $m \leq \min(m_1,m_2,m_3)$, where
%\begin{align*}
%m_1(\eps,\delta) &= O\bigl(\frac{n}{\eps^2}\log(1/\delta)\bigl), \\
%m_2(\eps,\delta) &= O\bigl(\frac{\sqrt{n}}{\eps^{5/2}}\log(1/\delta)\bigr),\\
%m_3(\eps,\delta) &= O\Bigl(\frac{\sqrt{\log\lvert X\rvert}\log n}{\eps^{7/2}}\log(1/\delta)\Bigr) .
%\end{align*}
\end{theorem}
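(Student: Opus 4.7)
The plan is to reduce the problem to a convex minimax over the convex hull of $\Q$, using the Yatracos family as a surrogate for total variation. Throughout, let $\mathcal{A}=\{A_{ij}:=\{q_i>q_j\}:i\neq j\}$, let $d_\mathcal{A}(q,q')=\max_{A\in\mathcal{A}}|q(A)-q'(A)|$, and let $\hat p$ denote the empirical distribution of $m$ i.i.d.\ samples from $p$. By Hoeffding and a union bound over the $O(n^2)$ sets in $\mathcal{A}$, it suffices to take $m=O(\log(n/\delta)/\eps^2)$ samples to ensure $d_\mathcal{A}(\hat p,p)\le\eps$ with probability at least $1-\delta$, so I assume this event throughout.

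As a warm-up, for $n=2$ with $A=A_{12}$ I set $t\in[0,1]$ equal to the clipping of $(\hat p(A)-q_2(A))/(q_1(A)-q_2(A))$ to $[0,1]$ and output the mixture $\hat q=tq_1+(1-t)q_2$. Convexity of $\tv$ gives $\tv(\hat q,p)\le t\cdot\tv(q_1,p)+(1-t)\cdot\tv(q_2,p)$, and combining with the Scheff\'e-type inequality $|p(A)-q_i(A)|\le\tv(q_i,p)$ leads, after a short case split (on whether the ``far'' distance is within factor $2$ of $\opt$ or not), to $\tv(\hat q,p)\le 2\opt+O(\eps)$. Geometrically this is the projection of $\hat p$ onto $\conv\{q_1,q_2\}$ in the pseudometric $d_\mathcal{A}$, so the natural ``static'' extension for general $n$ is to output
\[
\hat q\in\arg\min_{q\in\conv(\Q)}\,d_\mathcal{A}(q,\hat p),
\]
a finite-dimensional convex program over the simplex of mixture weights. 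Feasibility of $q^\star:=\arg\min_{q\in\Q}\tv(q,p)$ for the objective immediately yields $d_\mathcal{A}(\hat q,\hat p)\le\opt+\eps$, and hence $d_\mathcal{A}(\hat q,p)\le\opt+2\eps$.

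The hard step I anticipate is the geometric lemma that lifts this $d_\mathcal{A}$-control into a genuine total variation bound: for the minimizer $\hat q$ one must show $\tv(\hat q,p)\le 2\opt+O(\eps)$. The naive decomposition $\tv(\hat q,p)\le\tv(\hat q,q^\star)+\opt$ combined with convexity only recovers Yatracos' factor $3$, because the Scheff\'e set $\{\hat q>p\}$ need not belong to $\mathcal{A}$ once $\hat q$ is a nontrivial mixture. The improvement must exploit improperness: writing $\hat q=\sum_k t_k q_k$, I expect the sup-norm optimality of $\hat q$ on $\conv(\Q)$ to force cancellations among the signed measures $(q_k-p)$ integrated over the Scheff\'e set $\{\hat q>p\}$, so that $\sum_k t_k(q_k-p)(\{\hat q>p\})$ is strictly tighter than the convexity bound $\sum_k t_k\tv(q_k,p)$ and ultimately yields the factor~$2$. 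Making this cancellation quantitative---presumably through an LP/KKT analysis of the minimax optimum over $\conv(\Q)$, exploiting that every active constraint is a Yatracos set---is the key technical content and the place where improperness beats the proper factor of~$3$. The abstract also mentions an adaptive realization; I would implement it as an iterative mixture-refinement driven by a few adaptively chosen statistical queries per round, relying on Adaptive Data Analysis tools to justify reuse of the same sample across rounds at comparable sample complexity.
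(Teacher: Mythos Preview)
Your proposal has a genuine gap: you correctly isolate the ``geometric lemma'' that would lift $d_{\mathcal A}$-control over $\conv(\Q)$ to a factor-$2$ total-variation bound, but you do not prove it. Your own analysis shows that the route via $\tv(\hat q,p)\le\tv(\hat q,q^\star)+\opt$ only recovers Yatracos' factor~$3$, and the hoped-for cancellation among the signed measures $(q_k-p)$ on $\{\hat q>p\}$ is left as speculation. Without that lemma you have no proof for $n\ge 3$.

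The paper proceeds along a different reduction that avoids this difficulty. The pivot is the elementary triangle-inequality observation (Lemma~\ref{lem:termination}): if one can find \emph{any} $q\in\Delta(\X)$ with $\tv(q,q_i)\le\tv(p,q_i)+\eps$ for every $i$, then $\tv(q,p)\le 2\opt+\eps$ follows immediately. Thus the task is not to bound $\tv(\hat q,p)$ directly, but to locate a distribution whose distance vector to the $q_i$'s is coordinatewise dominated by that of $p$. Two features of the paper's static construction are absent from yours. First, the surrogate family is strictly richer than the bare Yatracos sets: it consists of thresholds $\mathbf{1}\{\sum_{j\ne i} h_j S_{i,j}\ge c\}$ of \emph{weighted} combinations of the $S_{i,j}$, and the equality $\Q_{\F}=\Q_{\tv}$ (Lemma~\ref{lem:polyt}) is established by an LP/minimax analysis showing that precisely these threshold functions arise as optimizers; the pairwise sets $A_{ij}$ alone are not shown to suffice. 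Second, the output $q$ is taken from all of $\Delta(\X)$, not only $\conv(\Q)$; restricting to the convex hull is an extra constraint that you would have to justify separately.

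Your adaptive sketch is closer in spirit to the paper's second method, but there too the mechanism differs: the paper maintains certified lower bounds $y^k\le v^*=(\tv(p,q_i))_i$ and, at each round, either uses a separation oracle for the convex set $\Q_{\tv}$ to certify $y^k+\eps\cdot 1_n\in\Q_{\tv}$ (outputting the witnessing $q$), or invokes the minimax theorem to produce functions $F_i$ whose estimated expectations reveal a coordinate of $y^k$ that can be raised by $\eps/2$. The factor~$2$ again enters only through Lemma~\ref{lem:termination}, never through a direct bound on $\tv(\hat q,p)$.
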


We prove \Cref{thm:improperub} and provide explicit sample complexity bounds in \Cref{sec:ub}.

\begin{theorem}[Lower bound - proper case]\label{thm:properlb}
For every $\alpha < 3$ there is a class $\Q$ of size $2$ that is not properly $\alpha$-learnable.
\end{theorem}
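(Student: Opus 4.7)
The plan is to expose the factor-3 geometry as a two-point hypothesis-testing obstruction and then to ``hide'' the distinguishing information inside an infinite domain so that the obstruction survives for every finite sample size.

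First I would pin down the tight factor-3 gadget. Fix a scale $\tau>0$ (to be chosen with $\tau > \eps/(3-\alpha)$, so that $3\tau > \alpha\tau+\eps$), a background probability measure $u$ on a carrier disjoint from three distinguished atoms $a,b,c$, and set
\[
q_1 = (1-3\tau)\,u + 3\tau\,\delta_a, \qquad q_2 = (1-3\tau)\,u + 3\tau\,\delta_b,
\]
together with the two adversarial targets
\[
p^{(1)} = (1-3\tau)\,u + 2\tau\,\delta_b + \tau\,\delta_c, \qquad p^{(2)} = (1-3\tau)\,u + 2\tau\,\delta_a + \tau\,\delta_c.
\]
A direct computation gives $\tv(p^{(i)},q_i)=\tau$ and $\tv(p^{(i)},q_{3-i})=3\tau$, so $\opt=\tau$ on each $p^{(i)}$, the unique $\alpha$-good proper answer on $p^{(i)}$ is $q_i$, and any other proper output incurs error exactly $3\tau > \alpha\opt+\eps$.

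Second, I would run the Le~Cam-style two-point coupling. The measures $p^{(1)}$ and $p^{(2)}$ coincide on $\X\setminus\{a,b\}$ and each assign total mass only $2\tau$ to the distinguishing set $\{a,b\}$, so $m$ i.i.d.\ samples from $p^{(1)}$ and $p^{(2)}$ may be coupled to agree on every coordinate with probability at least $(1-2\tau)^m$. Conditional on this agreement event any proper algorithm, deterministic or randomized, produces identically distributed outputs on the two inputs, hence errs on at least one of them with conditional probability $\ge \tfrac12$; the total failure probability is therefore at least $\tfrac12(1-2\tau)^m$, which exceeds any prescribed $\delta$ whenever $m \lesssim \log(1/\delta)/\tau$.

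The principal obstacle is upgrading this argument to every finite sample size $m$ simultaneously, since for a fixed $\Q$ the scale $\tau$ is pinned down by $\tv(q_1,q_2)=2\tau$, and an algorithm with $m \gg \log(1/\delta)/\tau$ samples can break the coupling simply by waiting to observe an $\{a,b\}$-sample. The plan to handle this is to work on an infinite domain $\X$ and to replace the single distinguishing gadget by an infinite family parametrized by the hiding atom $c_j$ (or, more carefully, by a matched family $\{(a_j,b_j,c_j)\}_{j\in\mathbb{N}}$ of disjoint copies of the gadget appearing inside the common $u$-background), so that for every algorithm $A$ and every sample budget $m$ the adversary can select a fresh index $j=j(A,m)$ for which the algorithm's samples provide no useful information about which of $p^{(1)}_j, p^{(2)}_j$ generated them. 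A Fano/Le~Cam argument over this infinite family then shows that no single finite sample complexity $m(\eps,\delta)$ can succeed on every adversarial target, proving $\Q$ is not properly $\alpha$-learnable. Making the ``always-fresh-copy'' mechanism formal while keeping $\Q$ a single fixed pair of distributions, and verifying that the factor-3 relation $\tv(p^{(i)}_j,q_{3-i})\ge (3-o(1))\tv(p^{(i)}_j,q_i)$ is preserved at each copy, is the delicate point of the construction.
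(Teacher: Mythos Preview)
Your basic gadget is sound (up to a harmless swap of the indices on $p^{(1)},p^{(2)}$): it really does produce the exact $3:1$ ratio, and your two-point Le~Cam coupling for a fixed sample budget $m$ is correct. You have also correctly identified the principal obstacle: a fixed $\Q$ pins down $\tau$, and your coupling degrades once $m\gg 1/\tau$.

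The gap is in your proposed fix. No ``fresh copy'' mechanism of the kind you describe can work, and the obstruction is exactly the triangle inequality. If $p^{(1)}_j,p^{(2)}_j$ are any two targets for which outputting $q_1$ is wrong on one and $q_2$ is wrong on the other with the gap you need, then necessarily
\[
\tv\bigl(p^{(1)}_j,p^{(2)}_j\bigr)\ \ge\ \tv\bigl(p^{(1)}_j,q_2\bigr)-\tv\bigl(p^{(2)}_j,q_2\bigr)\ \ge\ 3\tau-\tau\ =\ 2\tau.
\]
So every such pair is at least $2\tau$ apart in total variation, uniformly in $j$, and with $m\gg 1/\tau^2$ samples the two are always testable; no choice of index $j=j(A,m)$ changes this. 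Embedding many disjoint $(a_j,b_j,c_j)$ triples inside the $u$-background does not help either, because $q_1,q_2$ are a single fixed pair and the distinguishing mass must sit where $q_1$ and $q_2$ actually differ.

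What the paper does instead is to abandon the two-point coupling and compare two \emph{random families} of targets. With $q_1,q_2$ fixed as step densities $1\pm\beta$ on the two halves of $[0,1]$, one builds, for each sample budget $m$, families $\mathcal D_1,\mathcal D_2$ of targets obtained by planting spikes on a \emph{random} size-$k$ subset of $N\gg m^2$ tiny intervals; every $p\in\mathcal D_i$ has $\tv(q_i,p)=\beta$ and $\tv(q_{3-i},p)>3\beta-2\beta^2$. The key point is that one does not bound $\tv(p,p')$ for individual $p,p'$ (which is large, as above), but rather $\tv(\mathcal D_1^{\oplus m},\mathcal D_2^{\oplus m})$ for the \emph{mixtures}: conditioned on the birthday-paradox event that no two of the $m$ samples fall in the same tiny interval, both mixtures coincide with $m$ i.i.d.\ uniforms, and this event has probability $\approx 1$ when $N\gg m^2$. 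Le~Cam then gives an error probability $\ge 1/3$ on some $p$ in the family. This mixture-plus-birthday step is precisely the missing idea in your plan; once you have it, the ``for all $m$'' upgrade is immediate because $q_1,q_2$ do not depend on $N$.
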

We prove \Cref{thm:properlb} and provide explicit sample complexity bounds in \Cref{sec:lb}.

\begin{remark}
{A recent follow-up work by~\cite*{aamand2025fastgoodhypothesis}, which builds on and extends our lower
bound, identified a subtle gap in the construction used in our original
proof.  We are grateful to the authors for bringing this to our attention.
The proof in the present manuscript incorporates the necessary corrections.
The construction of the two key distributions, as well as the overall proof
strategy, remain essentially the same; only certain technical details in the
argument have been refined.}
\end{remark}

\paragraph{Tightness of \Cref{thm:improperub}.}
The factor $\alpha=2$ in \Cref{thm:improperub} in general can not be improved.
	This follows from~\cite{Chan14histograms} (Theorem 7) which demonstrates a class $\Q$ of distributions over $\{1,\ldots,N\}$
	such that any (possibly improper) algorithm that $\alpha$-learns
	this class with $\alpha < 2$ requires some $\Omega(\sqrt{N})$ samples.
	Note that in their Theorem statement the class $\Q$ is infinite,
	but a closer inspection of their proof reveals that it needs only to contain
	two distributions, and so their lower bound already applies for $\lvert \Q\rvert = 2$.

\paragraph{Proofs overview.}
Our approach for the lower bound is a variant of the proof in~\cite{Chan14histograms} and boils down to using a tensorized version of Le Cam's method together with a birthday paradox kind of argument.

For the upper bound, we introduce two methods, a static and an adaptive one, both of which are based on the observation that once we find a distribution $q$ so that $\tv(q,q_i)\leq \tv(p,q_i)+\epsilon$ for every $q_i\in \Q$ the result follows by the triangle inequality (see Lemma \ref{lem:termination}). The static method can be viewed as a direct extension of Yatracos' ideas as we also construct a family of functions of finite VC dimension and estimate the corresponding surrogate variational metric (see Equation~\eqref{eq:df}). Note however that our construction and analysis are more complex and rely on a careful inspection of barycenters with respect to the total variation metric.

The adaptive method, which could apply to other probability metrics\footnote{As long as they have a variational form as in \eqref{eq:df}, which is for example the case of Wasserstein's metric.} than $\tv$ proceeds in steps: it maintains lower bounds $z_i\leq \tv(p,q_i)$ and, at each step, increases one of them by at least $\epsilon$ until there exists a distribution $q$ such that $\tv(q,q_i)\le z_i+\epsilon$ for all $i$. Given that $\tv$ is bounded by $1$, this implies that the algorithm terminates after $|\Q|/\epsilon$ steps. The crux of the algorithm is in the implementation of each step. To this end we use the minimax theorem applied to $\min_q \tv(q,q_i)$ (since $\tv$ is a supremum) to find  functions $f_i$ so that some linear combination of the numbers $|\Ex[f_i(q)]-\Ex[f_i(q_i)]|-z_i-\epsilon$ is positive for \emph{any} distribution $q$. Applying this result for $q=p$ implies that estimating $\Ex[f_i(p)]$ will allow us to improve at least one of our lower bounds.

\subsection{Open Questions and Future Research}
The main result in this paper is the determination of the optimal approximation
factor in density estimation and the development of universal algorithmic approaches to achieve it.

One central issue that remains open concerns sample complexity.
Our current sample complexity upper bounds are either linear in $\lvert \Q\rvert$
or based on rather sophisticated techniques from adaptive data analysis
which includes dependencies on $\log\lvert \X\rvert$.
For comparison, Yatracos' proper algorithm which achieves factor 3
has a clean sample complexity of $\frac{\log\lvert \Q\rvert}{\eps^2}$.
It would be interesting to determine whether the factor $2$ can be achieved with a similar sample complexity.

We list below other possible suggestions for future research:
\begin{itemize}
\item {\cite{Mahalanabis08density} consider the case of $\Q=\{q_1,q_2\}$ 
and provide a {\it randomized} proper algorithm which outputs $q_i\in \Q$
such that $\Ex[\tv(q_i,p)] \leq 2\opt +o(1)$ (see Theorem 10 in \citep{Mahalanabis08density}).
Can this result be extended to arbitrary finite $\Q$?}
%\item Improve the sample complexity of $\alpha$-learning for $\alpha=2$.
\item Is it the case that any (possibly infinite) class $\Q$ that is $\alpha$-learnable for some $\alpha$
is $\alpha$-learnable for $\alpha=2$? E.g.\ assume that the family of Yatracos' sets of $\Q$
has a finite VC dimension (so $\Q$ is properly $\alpha$-learnable for $\alpha=3$).
Is $\Q$ $\alpha$-learnable for $\alpha=2$?
%\item Do there exists a randomized proper algorithm which achieves a factor $2$?
%Relatedly, can our algorithm always output a distribution which is a mixture of the $q_i$'s?
%\item What is the sample complexity of $\alpha$-learning for $\alpha\in (2,3)$?
%We know that for $\alpha=3$ it is $O(\log n)$ and we have various inferior bounds for $\alpha=2$.
%What can be said e.g.\ for $\alpha=2.5$?
\item Our result remains valid if we replace the total variation with any IPM\footnote{I.e.\ any metric defined by $d(p,q) = \sup_{f\in \F}\lvert \Ex_p[f]-\Ex_q[f]\rvert$, where $\F$ is a family of $\X\to[0,1]$ functions.} metric.
How about $f$-divergences? Is there a natural characterization of all $f$-divergences
for which every finite $\Q$ can be $\alpha$-learned for some constant $\alpha<\infty$?
%\item Is the solution always a mixture? (every vertex of the polytope can be realized by a distribution in the convex hull of \Q) (in this case there is a randomized proper algorithm).
\end{itemize}

\section{Preliminaries}

\paragraph{An assumption.}
Some of our arguments exploit the Minimax Theorem for zero-sum games~\citep{Neumann1928}.
	Therefore, we will assume a setting (i.e.\ the domain $\X$ and the set of distributions $\Q\subseteq \Delta(\X)$)
	in which this theorem is valid. Alternatively, one could state explicit assumptions
	such as finiteness or forms of compactness under which it is known that the Minimax Theorem holds.
	However, we believe that the presentation benefits from avoiding such explicit technical assumptions
	and simply assuming the Minimax Theorem as an ``axiom'' in the discussed setting.

\paragraph{Standard notation.}
We use $[N]$ to denote the set $\{1,\ldots,N\}$.
	For two vectors $u,v\in\R^n$ let $u\leq v$ denote the statement that $u_i\leq v_i$ for every $i$.
	Denote by $e_i$ the standard basis vector whose $i$'th coordinate is $1$ and its other coordinates are $0$
	and by $1_n$ the vector~$(1,\ldots,1)\in\R^n$.

We use standard notations for asymptotics such as $O,o,\Omega,\omega,\Theta$.
We may also sometimes use~$\tilde O$ or $\tilde \Omega$ to hide logarithmic factors.
E.g.\ $f=\tilde O(g)$ if $f = O(g \log^c (g))$ for some $c\in\N$.

%\subsection{Distribution learning under total-variation}

\subsection{Total Variation and Surrogates}

Let $\F$ be a family of $\X\to[0,1]$ functions.
	Assume that $\F$ is symmetric in the sense that whenever $f\in \F$ then also $1-f\in \F$
	(this allows us to remove the absolute value from some definitions and will simplify some calculations).
	Define a semi-metric on $\Delta(\X)$ (recall that $\Delta(\X)$ is the set of distributions over $\X$),
	\begin{equation}\label{eq:df}
	d_\F(p,q) = \sup_{f\in F}\bigl\{\Ex_{x\sim p}[f(x)]-\Ex_{x\sim q}[f(x)]\bigr\}.
	\end{equation}
	Note that when $\F$ is the set of all (measurable) $\X\to [0,1]$ functions then $d_\F$ is the total variation distance,
	that~$d_\F(p,q)$ is symmetric, i.e.\ $d_\F(p,q)=d_\F(q,p)$, and that and that $d_\F(\cdot,q):\Delta(\X)\to \R$ is convex (as a supremum over linear functions).

\paragraph{Distances vectors and sets.}

Let $\Q=\{q_1,\ldots,q_n\}\subseteq \Delta(\X)$,
and let $p$ be a distribution.
The $\F$-distance vector of $p$ relative to the $q_i$'s is the vector~$v=v(p)=(d_\F(p,q_i))_{i=1}^n$.

The following claim shows that in order to find $q$ such that $d_{\F}(q,p)\leq 2\min_{i}d_{\F}(q_i,p)+\eps$
	it suffices to find $q$ such that $v(q)\leq v(p)+\eps\cdot1_n$.
	All of our algorithms exploit this claim.
\begin{lemma}\label{lem:termination}
Let $q,p$ such that $v(q)\leq v(p) + \eps1_n$. Then $\tv(q,p)\leq 2\min_{i} \tv(q_i,p) + \eps$.
\end{lemma}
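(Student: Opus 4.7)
The plan is to just unpack the hypothesis coordinate by coordinate and apply the triangle inequality for the total variation metric. Reading the lemma in the setting where $\F$ is the family of all measurable $\X\to[0,1]$ functions (so that $d_\F$ coincides with $\tv$, matching the conclusion), the assumption $v(q) \leq v(p) + \eps 1_n$ says precisely that for each $i \in [n]$,
\[
\tv(q, q_i) \;\leq\; \tv(p, q_i) + \eps.
\]

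Next I would pick an arbitrary $i$ and chain: since $\tv$ is a proper metric (as emphasized in the introduction), the triangle inequality gives $\tv(q,p) \leq \tv(q, q_i) + \tv(q_i, p)$, and combining with the coordinate-wise bound yields
\[
\tv(q,p) \;\leq\; \tv(p, q_i) + \eps + \tv(q_i, p) \;=\; 2\,\tv(q_i, p) + \eps.
\]
Since this holds for every $i$, I would then take the minimum over $i$ on the right-hand side to obtain the claimed inequality $\tv(q,p) \leq 2 \min_i \tv(q_i, p) + \eps$.

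There is no serious obstacle; the lemma is really just a repackaging of the triangle inequality, and that is precisely why it serves as the termination criterion for the subsequent algorithms: any procedure that drives the $\F$-distance vector of $q$ below $v(p)$ (within slack $\eps$) automatically achieves a factor-$2$ approximation. The only point worth emphasizing in the write-up is that the argument crucially uses that $\tv$ is a genuine metric, not merely a divergence — one of the motivations given in the introduction for choosing total variation as the loss.
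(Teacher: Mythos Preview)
Your proof is correct and essentially identical to the paper's own proof: both unpack the hypothesis at the minimizing index and apply the triangle inequality for $\tv$. The only cosmetic difference is that you derive the bound for every $i$ and then take the minimum, whereas the paper fixes a minimizer $q_i$ from the start.
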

\begin{proof}
Follows directly by the triangle inequality; indeed, let $q_i$ be a minimizer of $\tv(\cdot,p)$ in $\Q$. Then,
$\tv(q,p)\leq \tv(q,q_i) + \tv(q_i,p) \leq (\tv(p,q_i) + \eps) + \tv(q_i,p) = 2\tv(q_i,p) + \eps$.
\end{proof}

Next, we explore which $v\in\R^n$ are of the form $v=v(p)$ for some $p\in\Delta(\X)$. 
For this we make the following definition.
A vector $v$ is called an {\em $\F$-distance dominating} vector if $v\geq v(p)$ for some distribution $p$. 
Define $\Q_{\F}$ to be the set of all dominating distance vectors.
When $\F$ is the set of all measurable $\X\to[0,1]$ functions, we denote $\Q_{\F}$ by $\Q_{\tv}$.
\begin{claim}\label{c:convexup}
$\Q_{\F}$ is convex and upward-closed\footnote{Recall that upwards-closed means that whenever $v\in \Q_{\F}$ and $u\geq v$ then also $u\in \Q_{\F}$.}.
\end{claim}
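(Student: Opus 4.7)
\textbf{Proof plan for Claim~\ref{c:convexup}.}

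The plan is to derive both properties directly from the definition of $\Q_{\F}$ and from the convexity of $d_{\F}(\cdot,q)$ that was already observed after equation~\eqref{eq:df}.

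For upward-closedness, I would argue in one line: if $v\in\Q_{\F}$, then by definition there is some distribution $p$ with $v\geq v(p)$; for any $u\geq v$ we then have $u\geq v\geq v(p)$, so $u\in\Q_{\F}$ witnessed by the same $p$.

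For convexity, let $v_1,v_2\in\Q_{\F}$, choose distributions $p_1,p_2$ with $v_j\geq v(p_j)$, and pick any $\lambda\in[0,1]$. The natural candidate distribution witnessing membership of $\lambda v_1+(1-\lambda)v_2$ in $\Q_{\F}$ is the mixture $p=\lambda p_1+(1-\lambda)p_2$. For each $i$, convexity of $d_{\F}(\cdot,q_i)$ (being a supremum of functions that are linear in the first argument) gives
\[
d_{\F}(p,q_i)\;\leq\;\lambda\, d_{\F}(p_1,q_i)+(1-\lambda)\,d_{\F}(p_2,q_i)\;\leq\;\lambda (v_1)_i+(1-\lambda)(v_2)_i,
\]
where the second inequality uses the hypothesis $v_j\geq v(p_j)$ coordinatewise. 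Thus $v(p)\leq \lambda v_1+(1-\lambda)v_2$, so this convex combination belongs to $\Q_{\F}$.

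There is no real obstacle here; the only mild subtlety is that one must use a mixture (not some more exotic combination) as the witnessing distribution and invoke convexity of $d_{\F}(\cdot,q_i)$ in the first slot, which is guaranteed because $f\mapsto \Ex_{x\sim p}[f(x)]$ is linear in $p$ and the supremum of linear functions is convex. The symmetry of $\F$ does not enter the argument.
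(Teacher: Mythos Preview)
Your proof is correct and follows essentially the same approach as the paper's own proof, which is a two-line argument invoking the triviality of upward-closedness and the convexity of $d_{\F}(\cdot,q)$. You have simply spelled out in detail the mixture-witness step that the paper leaves implicit.
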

\begin{proof}
That $\Q_{\F}$ is upward-closed is trivial.
Convexity follows since $d_\F$ is convex.
\end{proof}

The following claim shows that the non-trivial half-spaces that contain $\Q_{\F}$ have normals in the nonnegative orthant.
\begin{claim}\label{claim:separators}
If $h\in\R^n$ and $c\in \R$ satisfy that
$h\cdot v \geq c$ for all $v\in \Q_{\F}$,
then $h \geq 0$.
\end{claim}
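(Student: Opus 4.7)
The plan is to derive the conclusion $h \geq 0$ by a direct contradiction argument leveraging only the upward-closedness of $\Q_{\F}$ established in Claim \ref{c:convexup}. Convexity will play no role here; the whole content is the following observation: if a half-space $\{v : h \cdot v \geq c\}$ contains an upward-closed set, then no coordinate of $h$ can be negative, since pushing a point of the set arbitrarily far in the positive direction of any coordinate must preserve the inequality.

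More concretely, first I would note that $\Q_{\F}$ is nonempty: for any distribution $p \in \Delta(\X)$, the vector $v(p) = (d_\F(p,q_i))_{i=1}^n$ lies in $\Q_\F$ by definition. Then I would suppose for contradiction that some coordinate satisfies $h_i < 0$. Fix any $v \in \Q_\F$ (for instance $v = v(p)$ for some $p$) and consider the one-parameter family $v + M e_i$ for $M > 0$. Since $v + M e_i \geq v$ coordinatewise, upward-closedness of $\Q_\F$ (Claim~\ref{c:convexup}) gives $v + M e_i \in \Q_\F$, so by hypothesis
\[
h \cdot (v + M e_i) = h \cdot v + M h_i \geq c.
\]
Rearranging yields $M h_i \geq c - h \cdot v$, but the left-hand side tends to $-\infty$ as $M \to \infty$ (because $h_i < 0$), whereas the right-hand side is a fixed constant. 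This is a contradiction, so $h_i \geq 0$ for every $i$, i.e., $h \geq 0$.

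There is no real obstacle: the argument is essentially ``upward-closed sets can only be separated from below by nonnegative normals.'' The only small point to be careful about is making sure $\Q_\F$ is genuinely nonempty so that the translation argument has a basepoint, but this is immediate from the existence of any probability distribution on $\X$.
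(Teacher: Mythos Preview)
Your proof is correct and is essentially the same as the paper's: both argue by contradiction that if some $h_i<0$ then one can push the $i$'th coordinate of a vector in $\Q_\F$ arbitrarily high (using upward-closedness) to violate $h\cdot v\geq c$. The only cosmetic difference is that the paper picks the concrete vector with all coordinates $1$ except a large $i$'th coordinate, whereas you start from an arbitrary $v\in\Q_\F$ and translate by $Me_i$.
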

\begin{proof}
We prove the contraposition.
Assume that $h_i < 0$ for some $i\leq n$.
then there is a vector $u$ with $u_i > u_j = 1$ for all $j$, where $u_i$ is sufficiently large so that $h\cdot u < c$.
The proof is finished by noting that such a $u$ satisfies $u\in\Q_{\F}$ (because it dominates any distance vector).
\end{proof}

\begin{corollary}\label{cor:separators}
Let $C$ be compact and convex such that $C\cap \Q_\F=\emptyset$.
Then, there is $h\geq 0$ such that
\[\max_{v\in C}h\cdot v < \min_{u\in\Q_{\F}}h\cdot u.\]
\end{corollary}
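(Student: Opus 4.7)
The plan is to apply a strict separating hyperplane theorem to the disjoint convex sets $C$ and $\Q_\F$, and then invoke Claim~\ref{claim:separators} to force the separating normal into the nonnegative orthant.

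First I would use the strict separation form of Hahn--Banach in $\R^n$. Since $C$ is compact and convex, $\Q_\F$ is convex (Claim~\ref{c:convexup}), and the two are disjoint, this produces a nonzero $h\in\R^n$ and a scalar $c\in\R$ with
\[
h\cdot v < c < h\cdot u \qquad \text{for all } v\in C,\ u\in\Q_\F.
\]
Second, the right-hand inequality gives $h\cdot u \geq c$ for every $u\in\Q_\F$, which is precisely the hypothesis of Claim~\ref{claim:separators}, so $h\geq 0$. Finally, compactness of $C$ ensures that the maximum $\max_{v\in C}h\cdot v$ is attained and is strictly less than $c$, while $h\cdot u>c$ for every $u\in\Q_\F$ yields $\min_{u\in\Q_\F}h\cdot u\geq c$ (reading $\min$ as $\inf$ if necessary). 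Chaining these gives $\max_{v\in C}h\cdot v<\min_{u\in\Q_\F}h\cdot u$, which is the conclusion.

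The main obstacle is the first step: vanilla Hahn--Banach applied to two disjoint convex sets yields only weak separation ($h\cdot v\leq c\leq h\cdot u$), and upgrading to strict separation requires one of the sets to be closed and the other compact. Here $C$ is compact, but $\Q_\F$ is a priori only convex. To handle this I would separate $C$ from the closure $\overline{\Q_\F}$, which is convex and closed: in the paper's working setting, where the Minimax theorem is assumed to apply, the underlying compactness of $\Delta(\X)$ together with continuity of $p\mapsto v(p)$ makes $V:=\{v(p):p\in\Delta(\X)\}$ compact, and hence $\Q_\F=V+\R_{\geq 0}^n$ is closed as a Minkowski sum of a compact set and the closed nonnegative orthant. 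Thus $C\cap\overline{\Q_\F}=C\cap\Q_\F=\emptyset$, and the strict separation theorem applies verbatim, after which the argument above goes through.
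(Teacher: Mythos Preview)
Your proposal is correct and follows essentially the same approach as the paper: apply the standard separation theorem for convex sets to obtain a strict separator $h$, then invoke Claim~\ref{claim:separators} to conclude $h\geq 0$. The paper's proof is terser and simply asserts the strict separation, whereas you additionally justify the closedness of $\Q_\F$ needed for the strict form; this extra care is sound and does not change the route.
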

\begin{proof}
By the standard separation theorem for convex sets there is $h\in \R^n$ such that
$\max_{v\in C}h\cdot v < \min_{u\in\Q_{\F}}h\cdot u$. By Claim~\ref{claim:separators}
it follows that $h\geq 0$.
\end{proof}

Note that if $\F\subseteq \G$ are families of functions
then $\Q_{\G}\subseteq \Q_{\F}$. Thus, $\Q_{TV}\subseteq \Q_{\F}$ for every $\F$.
\begin{claim}
Let $\F$,$\G$ be families of $\X\to[0,1]$ functions.
The following two statements are equivalent:
\begin{enumerate}
\item $\Q_{\F}=\Q_{\G}$,
\item $\min_{v\in\Q_{\F}} h\cdot v = \min_{v\in\Q_{\G}} h\cdot v$, for every $h\geq 0$.
\end{enumerate}
\end{claim}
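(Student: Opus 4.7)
The plan is to handle the two implications separately. The forward direction (1) $\Rightarrow$ (2) is immediate: if $\Q_\F = \Q_\G$ as sets, then any functional, in particular $v \mapsto h\cdot v$ for $h\geq 0$, takes the same infimum over both sets.

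For (2) $\Rightarrow$ (1), I would argue by contrapositive. Suppose $\Q_\F \neq \Q_\G$. By symmetry of the roles of $\F$ and $\G$ in the statement, I can assume without loss of generality that there exists a point $v_0 \in \Q_\F \setminus \Q_\G$. The goal is to produce an $h \geq 0$ witnessing a strict gap in the minima, contradicting~(2).

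To build such an $h$, I would invoke \Cref{cor:separators} applied to $\Q_\G$: the singleton $C = \{v_0\}$ is compact and convex, and since $v_0 \notin \Q_\G$, we have $C \cap \Q_\G = \emptyset$. The corollary then yields some $h \geq 0$ such that
\[
h \cdot v_0 \;=\; \max_{v \in C} h \cdot v \;<\; \min_{u \in \Q_\G} h \cdot u.
\]
Since $v_0 \in \Q_\F$, this gives
\[
\min_{v \in \Q_\F} h \cdot v \;\leq\; h \cdot v_0 \;<\; \min_{u \in \Q_\G} h \cdot u,
\]
directly contradicting statement~(2) for this particular $h \geq 0$. Hence $\Q_\F = \Q_\G$.

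The only real obstacle is making sure the nonnegativity of $h$ is obtained, but this is precisely what \Cref{cor:separators} provides by combining the standard separation theorem with \Cref{claim:separators}, so the argument is essentially a clean application of the machinery already set up in the excerpt. A minor technical point is that the $\min$ in the statement should be read as $\inf$ if attainment is not guaranteed in the ambient setting, but this does not affect the strict inequality above.
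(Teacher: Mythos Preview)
Your proof is correct and follows essentially the same approach as the paper: both directions are handled identically, with the nontrivial implication proved by contrapositive via \Cref{cor:separators} applied to a singleton outside one of the sets. The paper's argument is slightly terser but the logical content is the same.
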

\begin{proof}
$1\implies 2$ is trivial.
For the other direction,
we prove the contraposition:
assume that $\Q_{\F}\neq \Q_{\G}$,
and without loss of generality that $u\in \Q_{\F}\setminus \Q_{\G}$.
Then, by Corollary~\ref{cor:separators}
there is $h\geq 0$ such that  $h\cdot u < h\cdot v$
for all $v\in \Q_{\G}$, and in particular,
$\min_{v\in\Q_{\F}} h\cdot v \neq \min_{v\in\Q_{\G}} h\cdot v$ as required.
\end{proof}

\section{Upper Bounds}\label{sec:ub}

In this section we show that every finite class $\Q$
	is $\alpha$-learnable for $\alpha=2$.
	This is achieved by \Cref{thm:ubdensities} and \Cref{thm:ubfinite} (stated below)
	which also provide quantitative bounds on the sample complexity.
	
\begin{theorem}[Upper bound infinite domain]\label{thm:ubdensities}
Let $\Q$ be a finite class of distributions over a domain $\X$ with $\lvert \Q\rvert = n$.
Then $\Q$ is $\alpha$-learnable with $\alpha=2$ and sample complexity
%\[m(\eps,\delta) =
%O\Bigl(\min\Bigl\{n\cdot \frac{\log(1/\delta)}{\eps^2}, \sqrt{n\log\log n}\cdot\frac{\log^{3/2}(1/\eps\delta)\log\log(1/\eps)}{\eps^{5/2}} \Bigr\}\Bigr).\]
\[m(\eps,\delta) =
\min\Bigl\{ O\Bigl( \frac{n+\log(1/\delta)}{\eps^2}\Bigr), \tilde O\Bigl(\sqrt{n}\cdot \frac{\log^{3/2}(1/\delta)}{\eps^{5/2}}\Bigr) \Bigr\}.\]

%\[\sqrt{n(\log n)^5(\log\log n)^4(\log\log\log n)^2}\cdot\frac{\log^{3/2}(1/\eps\delta)\log\log(1/\eps)}{\eps^{5/2}} \]
%\[O\Bigl(\min\bigl\{n, \sqrt{{n}\log\log(n/\eps)/{\eps}}\cdot\log^{3/2}(1/\eps\delta)\bigr\}\cdot \frac{\log(1/\delta)}{\eps^2}\Bigr).\]
\end{theorem}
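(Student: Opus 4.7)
Both halves of the claimed sample complexity come from algorithms whose correctness reduces, via \Cref{lem:termination}, to producing a distribution $q$ satisfying $\tv(q,q_i) \le \tv(p,q_i) + \eps$ for every $i \in [n]$. Note that $p$ itself trivially satisfies this system, so the task is purely algorithmic: we must certify such a $q$ using only i.i.d.\ samples from $p$. The two bounds will come from two different algorithms, a static one and an adaptive one.

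For the bound $O((n + \log(1/\delta))/\eps^2)$ I plan a static, Yatracos-style algorithm. The key step is to construct a symmetric family $\F$ of $\X\to[0,1]$ functions with two properties: (a) $\F$ has VC dimension $O(n)$, so that $m_1 = O((n+\log(1/\delta))/\eps^2)$ samples suffice to give $\lvert d_\F(\hatp, q_i) - d_\F(p, q_i)\rvert \le \eps/4$ uniformly in $i$ for the empirical distribution $\hatp$; and (b) every distribution $\mu\in\Delta(\X)$ admits a ``barycentric projection'' $\pi(\mu)\in\Delta(\X)$ satisfying $\tv(\pi(\mu), q_i) \le d_\F(\mu, q_i)$ for every $i$, which is equivalent to $\Q_\F = \Q_\tv$ in the notation of Section 2. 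A natural candidate for $\F$ is the pullback under $x\mapsto (q_1(x),\dots,q_n(x))$ of halfspaces in $\R^n$, which has VC dimension at most $n+1$; property (b) will then follow from a careful analysis of $\tv$-barycenters inside the cells of the partition that $\F$ induces on $\X$. Given these two properties, the algorithm draws $m_1$ samples, forms $\hatp$, and outputs $q = \pi(\hatp)$. The chain $\tv(q,q_i) \le d_\F(\hatp, q_i) \le d_\F(p, q_i) + \eps/4 \le \tv(p,q_i) + \eps/4$, combined with \Cref{lem:termination}, yields the desired $2\opt + \eps$ guarantee.

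For the bound $\tilde O(\sqrt{n}\cdot\log^{3/2}(1/\delta)/\eps^{5/2})$ I plan an adaptive algorithm. Maintain certified lower bounds $z_i \le \tv(p,q_i)$, initialized to $z_i = 0$. At each iteration, test (by convex feasibility in the finite-dimensional space of the $n$ distance inequalities) whether there exists $q\in\Delta(\X)$ with $\tv(q,q_i) \le z_i + \eps$ for all $i$; if so, output $q$ and stop. Otherwise, apply the Minimax Theorem to
\[\min_{q\in\Delta(\X)}\ \max_{\lambda\in\Delta([n]),\, f_1,\dots,f_n\in\F}\ \sum_i \lambda_i\bigl(\Ex_q[f_i] - \Ex_{q_i}[f_i] - z_i - \eps\bigr),\]
with $\F$ the symmetric family of all measurable $\X\to[0,1]$ functions, to extract $\lambda$ and $f_i$'s such that the inequality $\sum_i \lambda_i(\Ex_q[f_i] - \Ex_{q_i}[f_i]) > \sum_i \lambda_i(z_i + \eps)$ holds for \emph{every} distribution $q$. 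Specializing to $q = p$ and estimating the linear functionals $\Ex_p[f_i]$ from samples identifies some index $i$ with $\tv(p,q_i) > z_i + \eps/2$, yielding a valid update of $z_i$ by at least $\Omega(\eps)$. Since each $z_i$ lives in $[0,1]$, the procedure terminates after $O(n/\eps)$ iterations.

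The main difficulty lies in the sample-complexity accounting in the adaptive case: the functions $f_i$ chosen in later iterations depend on the outcomes of earlier empirical estimates, so a na\"ive uniform-convergence bound does not apply. I will therefore invoke the adaptive data analysis framework (Dwork et al.\ and subsequent refinements), which answers $k$ adaptive statistical queries to accuracy $\eta$ with confidence $1-\delta$ using $\tilde O(\sqrt{k\log(1/\delta)}/\eta^2)$ samples. Arranging the implementation so that the total number of adaptive queries is $k = O(n/\eps)$ (rather than the na\"ive $\Omega(n^2/\eps)$) at accuracy $\eta = \Theta(\eps)$ then produces the stated bound $\tilde O(\sqrt{n}\cdot\log^{3/2}(1/\delta)/\eps^{5/2})$. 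Verifying this query budget -- in particular, showing that each round can be advanced using only $O(1)$ adaptively chosen queries while still increasing the potential $\sum_i z_i$ by $\Omega(\eps)$ -- is the most delicate step of the argument and the one I expect to be the main obstacle.
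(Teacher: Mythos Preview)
Your high-level plan matches the paper: two algorithms, one static and one adaptive, both reducing to finding $q$ with $v(q)\le v(p)+\eps 1_n$ and then invoking \Cref{lem:termination}; the adaptive bound is obtained by feeding a statistical-query algorithm into the adaptive-data-analysis machinery of~\cite{Bassily16stability}. There are, however, two concrete gaps between your plan and what is actually needed.

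\textbf{Static algorithm: the class $\F$.} Your proposed $\F$ consists of pullbacks of single halfspaces under $x\mapsto(q_1(x),\dots,q_n(x))$, i.e.\ indicators of sets $\{x:\sum_j a_j q_j(x)\ge c\}$. The paper does \emph{not} use this class; it uses $\F=\bigcup_i\F_i$ where $\F_i=\{\,1_{\sum_{j\neq i}h_jS_{i,j}\ge c}:h_j,c\in\R\,\}$ and $S_{i,j}(x)=1[q_i(x)\ge q_j(x)]$ are the Yatracos indicators. The equality $\Q_\F=\Q_\tv$ is the technical heart of the static argument (\Cref{lem:polyt}), and its proof goes by analyzing the LP $\max_{f_i:\X\to[0,1]}\min_{p}\sum_i h_i(\Ex_p f_i-\Ex_{q_i}f_i)$ pointwise in $x$: the optimal $f_i^*(x)$ depends on the \emph{rank} of $q_i(x)$ among $q_1(x),\dots,q_n(x)$ (weighted by the $h_j$'s), which is precisely why weighted thresholds of the $S_{i,j}$'s arise. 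A single linear inequality in the $q_j(x)$'s does not capture this rank information (already for $n=3$, the set $\{x:q_1(x)\ge q_2(x)\text{ and }q_1(x)\ge q_3(x)\}$ is an intersection of two of your halfspaces but not one of them), so your hand-wave toward ``$\tv$-barycenters inside the cells'' does not obviously go through. You should either switch to the paper's class or supply an independent proof that halfspace pullbacks already give $\Q_\F=\Q_\tv$.

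\textbf{Adaptive algorithm: the per-round query budget.} You correctly identify that the na\"ive implementation costs $n$ queries per round and hence $\Theta(n^2/\eps)$ total, which only yields $\tilde O(n)$ rather than $\tilde O(\sqrt n)$. You then posit $O(1)$ queries per round at accuracy $\Theta(\eps)$ and flag this as the main obstacle. The paper does \emph{not} achieve $O(1)$ per round; instead, once the separating $h$ and witnesses $F_i$ are obtained from \Cref{lem:progress}, it runs a \emph{binary search} over the index set $[n]$, at each step estimating the two weighted averages $\sum_{i\in\text{lower half}}h_i(\Ex_pF_i-\Ex_{q_i}F_i-y_i)$ and $\sum_{i\in\text{upper half}}h_i(\cdots)$ and recursing into the larger half. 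This costs $O(\log n)$ queries per round at accuracy $\Theta(\eps/\log n)$, hence $k=O(n\log n/\eps)$ queries overall; plugging into \Cref{thm:adainf} gives the stated $\tilde O(\sqrt n\,\log^{3/2}(1/\delta)/\eps^{5/2})$. This binary search is the missing idea in your adaptive plan.
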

The first bound of $O\bigl(\frac{n+\log(1/\delta)}{\eps^2}\bigr)$ gives a standard dependency on $\eps,\delta$
(standard in the sense that a similar dependence appear in popular concentration bounds).
The second bound improved the dependence on $n$ from linear to $\tilde O(\sqrt{n})$,
however it has inferior dependence with respect to $\eps.\delta$.
Both of these bounds depend polynomially on~$n$,
which is poor comparing to the logarithmic dependence exhibited by the proper $\alpha=3$ learning algorithm due to Yatracos.
The next theorem shows that for finite domains one can achieve a logarithmic dependence in $n$
(as well as in the size of the domain):

\begin{theorem}[Upper bound finite domain]\label{thm:ubfinite}
Let $\Q$ be a finite class of distributions over a finite domain $\X$ with $\lvert \Q\rvert = n$.
Then $\Q$ is $\alpha$-learnable with $\alpha=2$ and sample complexity
\[m(\eps,\delta) = O\Bigl(\frac{\log \frac{n}{\eps}\sqrt{\log\lvert \X\rvert}\log^{\frac{3}{2}}(1/\delta)}{\eps^3}\Bigr).\]
\end{theorem}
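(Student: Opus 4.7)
The plan is to combine the adaptive algorithm sketched in the proof overview with the Private Multiplicative Weights (PMW) mechanism from adaptive data analysis. Recall that the adaptive approach maintains lower bounds $z_i\le \tv(p,q_i)$, initialized at $0$, and at each round either outputs a $q$ certifying $\tv(q,q_i)\le z_i+\eps$ for every $i$ (which by Lemma~\ref{lem:termination} yields the desired guarantee), or increases some $z_i$ by at least $\eps/2$. Since each $z_i\in[0,1]$, the number of rounds is at most $2n/\eps$.

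I would implement each round as follows. If no feasible $q$ exists, apply the minimax theorem to obtain a probability vector $\lambda$ on $[n]$ and functions $f_i:\X\to[0,1]$ such that for \emph{every} distribution $q$,
\[
\sum_i \lambda_i\bigl(\Ex_q[f_i]-\Ex_{q_i}[f_i]\bigr) \;>\; \sum_i \lambda_i (z_i + \eps).
\]
Specializing this certificate to $q=p$ and invoking pigeonhole, at least one index $i^\star$ satisfies $\Ex_p[f_{i^\star}]-\Ex_{q_{i^\star}}[f_{i^\star}] > z_{i^\star}+\eps$. Consequently, if we can estimate each $\Ex_p[f_i]$ to additive accuracy $\eps/4$, we can identify such an $i^\star$ and safely raise $z_{i^\star}$ by at least $\eps/2$, while keeping every $z_i$ a valid lower bound for $\tv(p,q_i)$.

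The procedure thus issues at most $k=O(n^2/\eps)$ adaptively chosen $[0,1]$-valued statistical queries against the unknown $p$. The main obstacle is that a fresh empirical estimate per query is not available: later queries depend on earlier answers, so the standard concentration bounds would require sample sizes growing polynomially with $k$. To handle this, I would invoke the PMW mechanism of Hardt and Rothblum together with the differential-privacy-to-generalization transfer theorem (Dwork--Feldman--Hardt--Pitassi--Reingold--Roth, Bassily et al.). This combination answers $k$ adaptively chosen $[0,1]$-valued statistical queries over a finite domain $\X$ with accuracy $\alpha$ and confidence $1-\delta$ using a sample of size $\tilde O\bigl(\sqrt{\log\lvert\X\rvert}\,\log k\,\log^{3/2}(1/\delta)/\alpha^3\bigr)$. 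Substituting $\alpha=\Theta(\eps)$ and $\log k=O(\log(n/\eps))$ yields exactly the claimed bound.

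Beyond invoking the right black-box, the remaining technical care is to extend PMW from Boolean counting queries to $[0,1]$-valued queries (standard, via discretization) and to set PMW's internal $(\eps_{\mathrm{DP}},\delta_{\mathrm{DP}})$-privacy parameters so that the transfer theorem delivers uniform $\eps/4$-accuracy across all $k$ queries, with the union-bound cost over rounds absorbed into the $\log^{3/2}(1/\delta)$ factor. Once these pieces are in place, the argument reduces to a modular composition of the adaptive algorithm from the proof overview with the PMW black box.
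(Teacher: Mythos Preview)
Your proposal is correct and follows essentially the same route as the paper: what you describe is precisely the paper's algorithm $A_1$ (which makes $n$ statistical queries per round over $O(n/\eps)$ rounds, for a total of $k=O(n^2/\eps)$ adaptive $\eps/4$-accurate queries), combined with the finite-domain adaptive-data-analysis bound of Bassily et al.\ (the PMW-based mechanism), so that $\log k = O(\log(n/\eps))$ gives the stated sample complexity. The only cosmetic difference is that the paper cites Corollary~6.3 of Bassily et al.\ directly rather than reassembling it from PMW plus the privacy-to-generalization transfer, so your discretization and parameter-tuning remarks are unnecessary but harmless.
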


\Cref{thm:ubdensities} and \Cref{thm:ubfinite} are based on three algorithms,
	which are presented and analyzed in  \Cref{sec:adaptivealg} and \Cref{sec:staticalg} .
	In \Cref{sec:ubproofs} we use these algorithms to prove \Cref{thm:ubdensities} and \Cref{thm:ubfinite}.

%Throughout this section we let $Q=\{q_1,\ldots,q_n\}$
%and let $p^*$ denote the target distribution.
%As noted earlier, our algorithms are based on Lemma~\ref{lem:termination}.
%Specifically, we first find a distance vector $v\in \Q_{\tv}$ such that $v\leq v^* + \eps1_n$,
%where $v^* = v(p^*)$ is the distances-vector of the target distribution.
%Once we find such a $v^*$ we pick any $q$ such that $v(q) \leq v + \eps 1_n$.
%Lemma~\ref{lem:termination} implies that $\tv(q,p^*)\leq 2\opt +\eps$, as required.

\subsection{Adaptive Algorithms}\label{sec:adaptivealg}

In this section we present two algorithms which share a similar ``adaptive''  approach.
	These algorithms yield the sample complexity bounds with sublinear dependence on $n$:
	that is, the $\tilde O(\sqrt{n})$ bound from \Cref{thm:ubdensities}
	and the $\tilde O(\log n)$ bound from \Cref{thm:ubfinite}).
	The algorithm which achieves the $\tilde O(n)$ bound from \Cref{thm:ubdensities}
	is based on a ``static'' approach and appears in \Cref{sec:staticalg}.

The two adaptive algorithms can be extended to yield $\alpha=2$ learners for other metrics:
	they only rely on the triangle-inequality and some form of convexity (which allows to apply the Minimax Theorem).
	In particular they extend to any {\it Integral Probability Metric} (IPM)~\citep{muller1997integral}.

A crucial property that will be utilized in the sample complexity analysis is that these algorithms require only a {\it statistical query access} (which we define next) to the target distribution $p$;
	in a \underline{statistical query}, the algorithm submits a function $f:\X\to[0,1]$ to a {\it statistical query oracle} and receives back an estimate of $\Ex_{x\sim p}[f(x)]$.
	Note that the oracle can provide an $\eps$-accurate\footnote{That is, an estimate which is correct up to an additive error of $\eps$}
	estimate with a high probability by drawing~$O(1/\eps^2)$ samples from $p$ per-query and returning the empirical average of $f$ as an estimate.
%	Naively, this will require a sample complexity of $\tilde O(k/\eps^2)$ if one requires $\eps$-accurate estimates to $k$ {\it adaptive} queries.
	Interestingly, there are sophisticated methods within the domain of {\it Adaptive Data Analysis}
	that significantly reduce the amortized sample complexity
	for estimating $k$ adaptive queries~\citep{Dwork15adaptive,Bassily16stability}.
	We will use these results in our sample complexity analysis (in \Cref{sec:ubproofs}).
	
We prove the following:
\begin{theorem}\label{thm:sq}
Let $\Q=\{q_1,\ldots,q_n\}$ be a class of distributions, let $\eps > 0$, and let $p$ be the target distribution.
Then. there exist algorithms $A_1,A_2$ such that
\begin{enumerate}
\item $A_1$ makes at most $2n^2/\eps$ statistical queries to $p$ and satisfies the following:
if the estimates to all queries are $\eps/4$-accurate then it outputs $q$ such that $v(q)\leq v(p) +\eps$.
\item $A_2$ makes at most $2n\log n/\eps$ statistical queries to $p$ and satisfies the following:
if the estimates to all queries are $\eps/2\log n$-accurate then it outputs $q$ such that $v(q)\leq v(p) +\eps$.
\end{enumerate}
\end{theorem}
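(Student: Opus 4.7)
My plan is to present both algorithms through a common adaptive template. Initialize $z=0\in\R^n$ and maintain the invariant $z\leq v(p)$ throughout the execution. Each outer round does one of two things: either (a) it finds a distribution $q$ realizing $v(q)\leq z+\eps\cdot 1_n$ and returns it (then $v(q)\leq v(p)+\eps\cdot 1_n$ as required), or (b) it identifies a coordinate $i^\star$ and a number $w^\star\in(z_{i^\star},d_\F(p,q_{i^\star})]$ with $w^\star-z_{i^\star}\geq\eps/2$, and updates $z_{i^\star}\leftarrow w^\star$. Since each $z_i$ lives in $[0,1]$ and the total possible gain in $\sum_i z_i$ is at most $n$, option (b) fires at most $2n/\eps$ times, so the number of outer rounds is at most $2n/\eps$.

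The progress step (b) rests on two applications of the Minimax Theorem. If no $q$ as in (a) exists, then the box $C=\{v\in\R^n: v\leq z+\eps\cdot 1_n\}$ is disjoint from $\Q_\F$, so \Cref{cor:separators} supplies a nonzero $h\geq 0$, WLOG with $\sum_i h_i=1$, such that $h\cdot(z+\eps\cdot 1_n)<\min_{u\in\Q_\F}h\cdot u\leq h\cdot v(p)$. Writing $d_\F(q,q_i)=\sup_{f\in\F}(\Ex_q[f]-\Ex_{q_i}[f])$ and applying the Minimax Theorem to the multilinear form
\[
G(q,f_1,\ldots,f_n)=\sum_i h_i\bigl(\Ex_q[f_i]-\Ex_{q_i}[f_i]-z_i-\eps\bigr),
\]
linear in $q\in\Delta(\X)$ and in each $f_i\in\conv(\F)$, produces fixed $[0,1]$-valued functions $f_1^\star,\ldots,f_n^\star$ for which $G(q,f_1^\star,\ldots,f_n^\star)>0$ holds uniformly in $q$. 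Evaluating at the unknown target $q=p$ yields
\[
\sum_i h_i\bigl(\Ex_p[f_i^\star]-\Ex_{q_i}[f_i^\star]-z_i\bigr)>\eps.
\]

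For $A_1$, I would query each $\Ex_p[f_i^\star]$ with one SQ to accuracy $\eps/4$, call the answer $\hat y_i$, and set $\tilde w_i:=\hat y_i-\Ex_{q_i}[f_i^\star]-\eps/4$. Since $f_i^\star$ realizes an admissible lower bound for $d_\F(p,q_i)$, the accuracy guarantee forces $\tilde w_i\leq d_\F(p,q_i)$, so updating $z_i\leftarrow\tilde w_i$ preserves the invariant. Combining the displayed inequality with $\hat y_i\geq\Ex_p[f_i^\star]-\eps/4$ gives $\sum_i h_i(\tilde w_i-z_i)>\eps/2$, hence some coordinate $i^\star$ satisfies $\tilde w_{i^\star}-z_{i^\star}>\eps/2$, which is the required progress. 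Aggregated over $2n/\eps$ rounds at $n$ queries each, this totals $2n^2/\eps$ SQs, matching the first bound.

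For $A_2$, I would locate an improvable coordinate via a dyadic binary search over $[n]$ using only $O(\log n)$ SQs per round. At each depth, the current candidate set $S\subseteq[n]$ is split into two halves of roughly equal $h$-mass; a single SQ on the rescaled $[0,1]$-valued function $(\sum_{i\in S'}h_i)^{-1}\sum_{i\in S'}h_i f_i^\star$ estimates the $h$-weighted surplus $\sum_{i\in S'}h_i(\Ex_p[f_i^\star]-\Ex_{q_i}[f_i^\star]-z_i)$ on one half $S'$, and the search recurses into the half carrying the greater surplus (the other's surplus is inferred by subtraction from the parent's estimate). After $\lceil\log_2 n\rceil$ levels the search localizes a single coordinate $i^\star$, from which a direct update value is extracted. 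The main technical obstacle is bounding the accumulated error along the root-to-leaf path: a per-query slack of $\eps/(2\log n)$ propagates to a total error of at most $\eps/2$, which is exactly what is needed so that progress of $\eps/2$ survives at $i^\star$ while the invariant $z\leq v(p)$ stays intact. Combining $2n/\eps$ outer rounds with $O(\log n)$ queries each yields the stated $2n\log n/\eps$ bound.
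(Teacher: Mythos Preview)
Your proposal is correct and mirrors the paper's proof: the same adaptive template (maintain a lower bound $z\leq v(p)$, use separation plus the Minimax Theorem to obtain witness functions $f_i^\star$ and weights $h_i$, then either query all $n$ witnesses for $A_1$ or binary-search for a high-surplus coordinate for $A_2$). The only caveat is in your $A_2$: splitting by equal $h$-mass need not halve $|S|$ (so the depth can exceed $\log n$), and inferring one half's surplus by subtraction from the parent fails at the root where you only have a lower bound; the paper instead splits $S$ by cardinality and queries the \emph{normalized} surpluses $L,U$ on both halves, so that $\ell\cdot\Ex_p[L]+u\cdot\Ex_p[U]\geq\eps$ with $\ell+u=1$ forces $\max\{\Ex_p[L],\Ex_p[U]\}\geq\eps$ at every level regardless of how the $h$-mass divides.
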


Note that by \Cref{lem:termination} it follows that the output distribution $q$ satisfies $\tv(q,p)\leq 2\opt + \eps$, as required.
%In the next section (\Cref{sec:ubproofs}) we will see how to combine these algorithm with tools from adaptive data anlaysis
%to prove our main results \Cref{thm:ubdensities} and \Cref{thm:ubfinite}.

%\todo{Perhaps note that this algorithm can be modified
%to satisfy differential privacy.
%This can be seen as a certificate for {\it non memorization}}
%
%\todo{Observe that this algorithm doesn't use the specifics of total variation and could be extended to other probability metrics that have the form of an IPM}

\begin{figure}
\begin{tcolorbox}
\begin{center}
{\bf A statistical query approach for $\alpha=2$ learning finite distributions}\\
\end{center}
\noindent
Given: A class $\Q=\{q_1,\ldots,q_n\}$, and a sampling access to a target distribution $p$ and $\eps,\delta>0$.\\
Output: A distribution $p_0$ such that $\tv(p_0,p) \leq 2\min_{i}\tv(q_i,p) + \eps$
with probability at least $1-\delta$.
\begin{enumerate}
\item Let $v^*=v(p)=(\tv(p,q_i))_i\in\R^n$, and set $y^0=(0,\ldots,0)\in\R^n$. (Note that $v^*$ is not known)
%\item Sample $m$ points from $p$. (Upper bounds on $m$ are presented in Section~\ref{sec:sampcomplexity}.)
\item For $k=1,\ldots$
\begin{enumerate}
	\item If $y^{k} + \eps\cdot 1_n\in\Q_{\tv}$ then output $p'$
	such that $\tv(p',q_i)\leq y^{k}_i + \eps$ for $i=1,\ldots,n$.
	\item Else,  find an index $j$ such that
	$y_k + \frac{\eps}{2}e_j \leq v^*$,
	set $y^{k+1} = y^k + \frac{\eps}{2}e_j$, and continue to the next iteration.
\end{enumerate}
\end{enumerate}
\end{tcolorbox}
\caption{Both algorithms $A_1,A_2$ follow this pseudo-code.
They differ in item 2(b) which is implemented differently in each of them;
$A_1$ uses more statistical queries than $A_2$ but $A_2$ requires less accuracy-per-query than~$A_1$.}
\label{alg:online}
\end{figure}

\begin{proof}[Proof of \cref{thm:sq}]

Both algorithms $A_1,A_2$ follow the same skeleton which is depicted in Figure~\ref{alg:online}.
	The approach is based on Lemma~\ref{lem:termination}
	by which it suffices to find a vector~$y\in\Q_{\tv}$ such that $y \leq v^* + \eps\cdot1_n$, where $v^*=v(p)$
	is the distance vectors of the target distribution $p$ with respect to the~$q_i$'s.
	The derivation of such a distance-vector $y$ is based on the convexity of $\Q_\tv$,
	and the access of the algorithms to $\Q_\tv$ can be conveniently abstracted via the following separation oracle:
\begin{definition}[Separation oracle]
A separation oracle for $\Q_{\tv}$ is an algorithm which, given  an input point $v\in\R^n$,
if $v\in \Q_{\tv}$ then it returns $q$ such that $v(q)\leq v$, and otherwise, it returns a hyperplane separating $v$ from $\Q_{\tv}$.
\end{definition}
The separation oracle is used in item 2.

The derivation of the desired distances-vector $y$ is achieved by producing an increasing sequence of vectors
	\[0 = y^0\leq y^1\leq y^2\leq\ldots\leq v^*,\]
	such that $y^{k+1}$ is obtained from $y^k$ by increasing a carefully picked coordinate $j$ by $\eps/2$ (in item 2(b)).
	We postpone the details of how $j$ is found and first assume it in order to argue that total number of iterations is at most $O(n/\eps)$:
	indeed, observe that the $\|y^k\|_1$ increases by $\eps/2$ in each step (i.e.\ $\|y^k- y^{k-1}\|\geq \eps/2$).
	Thereofore,  since $\|y^k\|_1\leq \|v^*\|_1\leq n$ we see that after at most $t\leq 2n/\eps$ steps,
	$y^t$ must satisfy $y^t+ \eps\cdot 1\in \Q_{\tv}$.
	In this point a distribution $q$ is outputted such that $v(q) \leq y^t + \eps\cdot 1\leq  v(p) + \eps\cdot 1_n$,
	as required.

It thus remains to explain how an appropriate index $j$ is found in item 2(b) (which is also where the implementations of $A_1,A_2$ differs).
	The derivation of $j$ follows via an application of LP duality (in the form of the Minimax Theorem) as we explain next.

\subsubsection{Finding an index in each step}\label{sec:binary}

Consider an arbitrary step in the algorithm, say the $k$'th step.
Thus, we maintain a vector $y^k$ that satisfies $y^k\leq v^*$.
We assume that $y^k + \eps\cdot 1_n \notin \Q_{\tv}$ (or else we are done),
and we want to show how, using few statistical queries, one can find an index $j$
such that $y^k + \frac{\eps}{2}e_j\leq v^*$.

The following lemma is the crux of the argument.
On a high level, it shows how using a few statistical queries,
one can estimate a vector $\hat z=\hat z(p)\in\R^n$ such that (i) $\hat z\leq v^*$,
and (ii) there is an index $j$ such that $y^k_j + \frac{\eps}{2}\leq \hat z_j$.
This means that the index $j$ satisfies the requirements,
and we can proceed to the next step by setting $y^{k+1} = y^k + \frac{\eps}{2}e_j$.
\begin{lemma}\label{lem:progress}
Let $y\in\R^n$ such that $y\notin \Q_{\tv}$.
Then, there are $n$ functions $F_i:\X\to[0,1]$,
and $n$ coefficients $h_i\geq 0$ with $\sum_i h_i=1$,
such that for every distribution $p$ the vector $z=z(p)$,
defined by $z_i = \Ex_p[F_i] - \Ex_{q_i}[F_i]$, satisfies:
\begin{enumerate}
\item $\sum_{i}h_i\bigl(z_i - y_i\bigr) > 0$, and
\item $z_i \leq \tv(p,q_i)$ for all $i$.
\end{enumerate}
\end{lemma}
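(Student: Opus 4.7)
The plan is to combine the separation Corollary~\ref{cor:separators} with an application of the Minimax Theorem to the variational representation of $\tv$. Since $y \notin \Q_{\tv}$ and $\Q_{\tv}$ is convex by Claim~\ref{c:convexup}, I would first apply Corollary~\ref{cor:separators} to the compact convex singleton $C = \{y\}$ to obtain $h \in \R^n$ with $h \geq 0$ such that
\[
h \cdot y \;<\; \beta \;:=\; \inf_{u \in \Q_{\tv}} h \cdot u.
\]
Strict separation forces $h \neq 0$, so after rescaling I may assume $\sum_i h_i = 1$; these will be the coefficients promised by the lemma.

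Next I would rewrite $\beta$ in a form that exposes the functions $F_i$. Because $h \geq 0$ and $\Q_{\tv}$ is the upward closure of $\{v(p) : p \in \Delta(\X)\}$, we have $\beta = \inf_{p} \sum_i h_i \tv(p, q_i)$. Expanding each $\tv(p,q_i)$ via its variational definition and observing that the resulting supremum decouples across the tuple $(f_1, \ldots, f_n)$ (since $h_i \geq 0$) yields
\[
\beta \;=\; \inf_{p} \sup_{f_1, \ldots, f_n} \sum_i h_i \bigl(\Ex_p[f_i] - \Ex_{q_i}[f_i]\bigr).
\]
The payoff is affine in $p$ for each fixed tuple $(f_i)$ and affine in each $f_i$ for fixed $p$, and the domains $\Delta(\X)$ and $[0,1]^{\X}$ are convex; invoking the Minimax Theorem (which the paper adopts as a standing axiom in this setting) lets me swap the order:
\[
\beta \;=\; \sup_{f_1,\ldots,f_n} \inf_{p} \sum_i h_i \bigl(\Ex_p[f_i] - \Ex_{q_i}[f_i]\bigr).
\]

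Since this supremum equals $\beta > h \cdot y$, I can pick functions $F_1, \ldots, F_n : \X \to [0,1]$ whose inner infimum exceeds $h \cdot y$ (any tuple within $(\beta - h \cdot y)/2$ of the supremum will do). Setting $z_i(p) := \Ex_p[F_i] - \Ex_{q_i}[F_i]$, the previous display becomes $\sum_i h_i z_i(p) > \sum_i h_i y_i$ for every distribution $p$, giving condition (1). Condition (2) is immediate from the variational definition of $\tv$: since $F_i$ is one particular $\X \to [0,1]$ function, $z_i(p) \leq \sup_f \{\Ex_p[f] - \Ex_{q_i}[f]\} = \tv(p, q_i)$.

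The one delicate step I anticipate is the minimax interchange, because $\Delta(\X)$ and $[0,1]^{\X}$ need not be finite so the swap is not automatic. The paper's standing axiom that Minimax holds in the discussed setting removes this obstacle; absent that, one would instead invoke a generalized minimax such as Sion's theorem under suitable compactness and semicontinuity of the bilinear payoff.
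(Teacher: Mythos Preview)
Your proposal is correct and follows essentially the same route as the paper: separate $y$ from $\Q_{\tv}$ via Corollary~\ref{cor:separators} to obtain the nonnegative normalized $h$, rewrite $\min_{u\in\Q_{\tv}} h\cdot u$ as a min--max over distributions and test functions, swap the order via the Minimax Theorem, and read off the $F_i$'s. Your version is in fact slightly more careful than the paper's, since you select $F_i$'s achieving inner value strictly above $h\cdot y$ rather than assuming the supremum is attained.
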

We stress that the $n$ functions $F_i$'s depend only on the $q_i$'s and on $y$.
\begin{proof}[Proof of Lemma~\ref{lem:progress}]
First, use Corollary~\ref{cor:separators} to find $h\geq 0$,
such that $\sum_i{h_i y_i} < \min_{v\in\Q_{\tv}}\sum_i{h_iv_i}$.
Note that necessarily $h\neq 0$, and therefore we can normalize it so that $\sum_{i}h_i=1$.
Next, we find the functions $F_i$'s using the Minimax Theorem~\citep{Neumann1928}:
\begin{align*}
\sum_i{h_iy_i} < \min_{u\in \Q_{\tv}} \sum_i h_iu_i
   &= \min_{p\in\Delta(\X)}\max_{f_i:\X\to[0,1]}\sum_ih_i(\Ex_p[f_i] - \Ex_{q_i}[f_i])\\
   &= \max_{f_i:\X\to[0,1]}\min_{p\in\Delta(\X)}\sum_ih_i(\Ex_p[f_i] - \Ex_{q_i}[f_i])\,.
\end{align*}
Pick the functions $F_i$'s to be maximizers of the last expression (i.e.\ the maximizers of $\min_{p\in\Delta(\X)}(\Ex_p[f_i] - \Ex_{q_i}[f_i])$).
Therefore, $\sum_i{h_iy_i}\leq \sum_ih_i(\Ex_p[f_i] - \Ex_{q_i}[f_i])$ for every distribution $p$.
This is equivalent to $\sum_{i}h_i\bigl(z_i - y_i\bigr) > 0$, which is the first item of the conclusion.
For the second item, note that
\[z_i = \Ex_p[F_i] - \Ex_{q_i}[F_i]\leq \max_{f_i:\X\to[0,1]}\Ex_p[f_i] - \Ex_{q_i}[f_i] = \tv(p,q_i), \]
as required.
\end{proof}

\begin{figure}
\begin{tcolorbox}
\begin{center}
{\bf Binary search}\\
\end{center}
\noindent
Input: vectors $y,h$, and $n$ functions $F_i$ as in Lemma~\ref{lem:progress}, and a sample access to the target distribution $p$.\\
Output: an index $j$ such that $y+\frac{\eps}{2}e_j \leq v^*$.
\begin{enumerate}
\item Set $n_{min}=1, n_{max}=n$.\\
While $n_{min} < n_{max}$:
\begin{enumerate}
	\item Set $n_{mid} = \lfloor\frac{n_{min} + n_{max}}{2}\rfloor$, $\ell= \sum_{i=n_{min}}^{n_{mid}}h_i$, $u=\sum_{i=n_{mid}+1}^{n_{max}}h_i,$ and
	\[L(x) = (1/\ell)\sum_{i=n_{min}}^{n_{mid}}h_i\bigl(F_i(x) - \Ex_{q_i}[F_i]-y_i\bigr)\]  \[U(x) = (1/u)\sum_{i=n_{mid+1}}^{n_{max}}h_i\bigl(F_i(x)- \Ex_{q_i}[F_i]-  y_i\bigr).
	\]
	\item Submit statistical queries to derive estimates $ \hat \mu_L, \hat \mu_U$ of $\Ex_p[L(x)],\Ex_p[U(x)]$ respectively up to an additive error of~$\frac{\eps}{2\log n}$.
	\item If $\hat \mu_L \geq \hat \mu_U$ then set $n_{min}= n_{min}$, $n_{max} = n_{mid}$, and normalize $h_i= \frac{h_i}{\ell}$ for $n_{min}\leq i \leq n_{max}$
	and else set $n_{min} = n_{mid}+1$, $n_{max}=n_{max}$, and normalize $h_i= \frac{h_i}{u}$ for $n_{min}\leq i \leq n_{max}$.
\end{enumerate}
\item Output $n_{min}$ ($=n_{max}$).
\end{enumerate}
\end{tcolorbox}
\caption{Binary search for an appropriate index $i$}
\label{alg:binary}
\end{figure}

We next show how to use Lemma~\ref{lem:progress} to find an appropriate index $j$.
Plug in the lemma~$y=y^k + \eps\cdot 1_n$, and set $z=z(p)$, where $p$ is the target distribution.
Note that since the $F_i$'s are known, we can use statistical queries for~$\Ex_{p}[F_i]$'s to estimate the entries of $z$.
By the first item of the lemma:
\[ \sum_{i}h_i\bigl(z_i - y^k_i - \eps\bigr) \geq 0 \implies  \sum_{i}h_i\bigl(z_i - y^k_i \bigr) \geq \eps,\]
which implies that there exists an index $j$ such that $y^k_j + \eps \leq z_j$
(in fact it shows that if we interpret the $h_i$'s as a distribution over indices $i$ then,
on average, a random index will satisfy it).
The second item implies that increasing such a coordinate $j$ by $\eps$ will keep it upper bounded $v^*_j$.

Thus, it suffices to estimate each coordinate $z_i$ up to an additive error of $\eps/4$,
and pick any index $j$ such that the estimated value satisfies $\hat z_j \geq 3\eps/4$.
$A_1$ achieves this simply by querying $n$ statistical queries (one per $F_i$) with accuracy $\eps/4$.
So, the total number of statistical queries used by $A_1$ is at most $\frac{n}{\eps}\cdot n$,
and if each of them is $\eps/4$-accurate then it outputs a valid distribution $q$.

It remains to show how $A_2$ finds an index $j$.
$A_2$ uses a slightly more complicated binary-search approach,
which uses just $\log n$ statistical queries, but requires higher accuracy of $\eps/4\log n$.

\paragraph{Binary search for an appropriate index $i$.}
%It will turn out fruitful to consider a more convoluted way of finding an index $j$.
The pseudo-code appears in Figure~\ref{alg:binary}.
We next argue that the index $j$ outputted by this procedure satisfies $z_j-y_j\geq \eps/2$.
Consider the first iteration in the while loop; note that
$\Ex_p[L(x)] = (1/\ell)\sum_{i} h_i(z_i - y_i),~~\Ex_p[U(x)] = (1/u)\sum_{i} h_i(z_i - y_i)$.
Therefore, since $\eps \leq \sum_i{h_i(z_i-y_i)}$ it follows that
$\eps \leq \sum_i{h_i(z_i-y_i)} = \ell\Ex_p[L(x)] + u\Ex_p[U(x)]$.
Now, $\ell+u=1$, and therefore $\max\{\Ex_p[L(x)],\Ex_p[U(x)]\}$ is at least $\eps$.
This in turn implies that $\max\{\hat \mu_L, \hat \mu_U\}$ is at least $\eps-\frac{\eps}{2\log n}$.
Therefore, in the second iteration we have $\sum_{i=n_{min}}^{n_{max}}h_i(z_i-y_i)\geq \eps-\frac{\eps}{2\log n}$.
By applying the same argument inductively we get that at the $m$'th iteration we have
$\sum_{i=n_{min}}^{n_{max}}h_i(z_i-y_i)\geq \eps-\frac{m\cdot\eps}{2\log n}$,
and in particular in the last iteration we find an index $j$ such that $z_j-y_j\geq \eps/2$,
as required.

\end{proof}

\section{A Static Algorithm}\label{sec:staticalg}

\paragraph{Uniform convergence.}
Before we describe the main result in this section
we recall some basic facts from statistical learning theory that will be useful.
Let $\F$ be a class of functions from $\X\to[0,1]$.
We say that $\F$ has uniform convergence rate of (at most) $d$
if for every distribution $p$ over $\X$ and every $m\in\N,\delta\in (0,1)$,
\[\Pr_{S\sim p^m}\Bigl[\sup_{f\in \F}\lvert p(f) - p_S(f)\rvert > \sqrt{\frac{d+\log(1/\delta)}{m}}\Bigr]\leq \delta.\]

It is well known that if $\F$ is a class of $\X\to\{0,1\}$ functions with VC dimension $d$
then its uniform convergence rate is $\Theta(d)$~\cite{Vapnik71uniform}.

\begin{lemma}\label{lem:union}
Let $\F_1,\ldots, \F_d$ be classes with VC dimension at most $d$.
Then, the VC dimension of $\cup_i \F_i$ is at most $10d$.
\end{lemma}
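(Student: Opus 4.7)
The plan is to give a straightforward union-with-Sauer-Shelah argument. Suppose that $\cup_{i=1}^d \F_i$ shatters a set $S$ of size $n$; I will derive an upper bound on $n$ in terms of $d$ and check that it is at most $10d$.

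First I would recall Sauer-Shelah: any class of VC dimension at most $d$ realizes at most $\binom{n}{\leq d}\leq (en/d)^d$ distinct labelings on $S$, valid for $n\geq d$. Applying this to each $\F_i$ and taking the union gives
\[
\bigl|\{f|_S : f\in \textstyle\cup_i \F_i\}\bigr|\;\leq\; d\cdot (en/d)^d.
\]
Shattering requires this count to be at least $2^n$, so
\[
2^n\;\leq\; d\cdot (en/d)^d,
\]
equivalently $n \leq \log_2 d + d\log_2(en/d)$.

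Now I would plug in $n=10d$ and check this inequality fails, which forces $n<10d$. Indeed with $n=10d$ the right-hand side becomes $\log_2 d + d\log_2(10e) \leq \log_2 d + 4.78\, d$, which is strictly less than $10d$ for every $d\geq 1$ (the small cases $d=1$ can also be handled by a direct counting argument, or by noting that any single class already has VC dim at most $d$, so $n\leq d \leq 10d$ trivially if the shattered set lies entirely inside one $\F_i$; otherwise $d\geq 2$ and the above bound applies). Hence $n \leq 10d$, establishing the claimed bound on $\mathrm{VC}(\cup_i \F_i)$.

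There is essentially no obstacle here; the only subtlety is making sure the application of Sauer-Shelah is used with $n\geq d$ (otherwise $n<d\leq 10d$ already) and that the constant $10$ is large enough to absorb the $\log_2(10e)$ factor. The argument is tight up to the constant: the general principle is that unioning $k$ classes of VC dimension $d$ increases the VC dimension only by an additive $O(\log k)$ term inside a $d\log$-type bound, which for $k=d$ yields the clean linear bound $O(d)$.
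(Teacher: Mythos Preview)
Your argument is correct and follows essentially the same route as the paper: bound the number of patterns on a putatively shattered set of size $10d$ via Sauer--Shelah applied to each $\F_i$, take the union over the $d$ classes, and verify that the resulting count falls below $2^{10d}$. The only cosmetic difference is that the paper uses the entropy form $\binom{n}{\leq k}\leq 2^{nh(k/n)}$ while you use the equivalent $(en/d)^d$ form; both yield the same numerical check that $\log_2 d + d\log_2(10e)<10d$.
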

\begin{proof}
We show that $\cup_i \F_i$ does not shatter a set of size $10d$.
Let $Y\subseteq \X$ of size $100d$.
Indeed, by the Sauer-Shelah Lemma~\cite{sauer72sauer}:
\[\bigl\lvert (\cup_i \F_i) |_Y \bigr\rvert  \leq d{100d \choose \leq d} \leq d 2^{10dh(1/10)} < 2^{10d},\]
where $(\cup_i \F_i) |_Y = \{f \cap Y : f\in \cup_i \F_i \}$,
and the second to last inequality follows by a standard upper bound on the binomial coefficients
by the entropy function: ${n \choose k}\leq 2^{nh(k/n)}$ for every $k\leq n$, where $h(p) =  - p\log p  -  (1-p)\log(1-p)$.
\end{proof}

We next present the main result in this section which is an algorithm which achieves factor $2$
whose sample complexity is $O(\frac{n+\log(1/\delta)}{\eps^2})$.
It is conceptually simpler than the adaptive algorithms from the previous section (although the proof here is more technical).
Specifically, it is based on finding a set $\F$ of $\X\to[0,1]$ functions
which satisfies two properties:
\begin{itemize}
\item[(i)] Given some $O(\frac{n+\log(1/\delta)}{\eps^2})$ samples from $p$,
one can estimate $d_\F(p,\cdot)$ up to an additive $\eps$ error, with probability at least $1-\delta$ (where the probability is over the samples from $p$). In particular this means that the distance vector $v^*_{\F}=v_{\F}(p)$
of $p$ with respect to $\F$ can be estimated from this many samples.
\item[(ii)] $\tv$ and $d_\F$ have the same distances vectors, i.e.\ $\Q_{\F} = \Q_{\tv}$.
\end{itemize}
Using these two items the algorithm proceeds as follows:
it uses the first item to estimate $v^*_\F=v_\F(p)$ up to an additive~$\eps$.
Then, it uses the second item (by which $v^*_\F\in\Q_\tv$) to find $q$ such that $v(q) \leq v^*_\F + \eps \leq v^* +\eps$
and outputs it. \Cref{lem:termination} then implies that $\tv(q,p)\leq 2\opt + \eps$ as required.

\begin{theorem}\label{thm:static}
Let $\Q=\{q_1,\ldots,q_n\}\subseteq\Delta(\X)$.
Then there exists a class $\F=\F(Q)$ of functions from $\X$ to $\{0,1\}$
such that:
\begin{enumerate}
\item $\Q_{\tv} = \Q_{\F}$,  and
\item The VC dimension of $\F$ is at most $10n$ (in particular, the uniform convergence rate of $\F$ is some $O(n)$).
\end{enumerate}
\end{theorem}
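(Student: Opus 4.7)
The plan is to realize $\F$ as a union $\F=\bigcup_{i=1}^n \F_i$ of $n$ subclasses, each of VC dimension at most $n$, so that Lemma~\ref{lem:union} immediately yields the $10n$ bound. For each $i\in[n]$ I introduce the feature map $\phi^{(i)}:\X\to\{0,1\}^n$ defined by $\phi^{(i)}_j(x)=\mathbbm{1}[q_j(x)\leq q_i(x)]$ (treating $q_j$ as a density with respect to a common dominating measure), and take $\F_i$ to be the class of halfspaces in these features together with their complements, i.e.\
\[ \F_i=\bigl\{\{x:\alpha\cdot\phi^{(i)}(x)>\beta\}:\alpha\in\R^n,\,\beta\in\R\bigr\}\cup\{\text{complements}\}. \]
Because $\phi^{(i)}_i\equiv 1$, only $n-1$ coordinates vary across $x$, so $\F_i$ is an affine halfspace class on $\R^{n-1}$ and therefore has VC dimension at most $n$.

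To prove $\Q_\F=\Q_\tv$ I will invoke the Claim stated just before the theorem: it suffices to show that $V_\F(h):=\min_p\sum_i h_i d_\F(p,q_i)$ equals $V_\tv(h):=\min_p\sum_i h_i\tv(p,q_i)$ for every $h\geq 0$. The inequality $V_\F\leq V_\tv$ is immediate because $d_\F\leq\tv$. For the reverse, I apply the Minimax Theorem to both problems to get
\[ V_\F(h)=\sup_{(f_i)\in\F^n}\min_{p}\sum_i h_i\bigl(\Ex_p[f_i]-\Ex_{q_i}[f_i]\bigr), \]
and the analogous formula for $V_\tv$ in which $f_i$ ranges over all $\X\to[0,1]$ functions; so it is enough to exhibit a tuple $(f_i^*)\in \F^n$ whose inner value equals $V_\tv(h)$.

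The construction of $(f_i^*)$ is where the barycenter viewpoint enters. Let $p^*$ be any minimizer of $p\mapsto\sum_i h_i\tv(p,q_i)$. A pointwise Lagrangian analysis of the convex objective $\int\sum_i h_i|dp-dq_i|\,d\mu$ under $\int p=1,\,p\geq 0$ shows that $p^*(x)$ may be chosen to equal the $\lambda$-weighted quantile of $\{q_j(x)\}_j$ at every $x$, for a single global $\lambda\in[0,1]$ determined by the integral constraint. With this choice of $p^*$, the Yatracos indicator $f_i^*(x):=\mathbbm{1}[dp^*(x)>dq_i(x)]$ simplifies to $\mathbbm{1}\bigl[\sum_j h_j\mathbbm{1}[q_j(x)\leq q_i(x)]<\lambda\bigr]$, which is precisely a halfspace in $\phi^{(i)}$ and hence lies in $\F_i\subseteq\F$. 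The same first-order (KKT) optimality further forces the function $g(x):=\sum_i h_i f_i^*(x)$ to be constant on $\mathrm{supp}(p^*)$ and at least that constant on its complement, so that $\min_p\Ex_p[g]=\Ex_{p^*}[g]$. Combining the pieces, $\min_p\sum_i h_i(\Ex_p[f_i^*]-\Ex_{q_i}[f_i^*])=\sum_i h_i\tv(p^*,q_i)=V_\tv(h)$, as required.

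The main obstacle I anticipate is handling ties $\{dp^*=dq_i\}$ of positive measure: on such sets the Yatracos set is not uniquely defined, and the $\{0,1\}$-valued $f_i^*$ that simultaneously belongs to $\F_i$ and realizes the saddle condition must be selected carefully from the subdifferential. I expect this can be handled either by a perturbation argument (perturb the $q_j$'s to generic position, derive the statement, and then pass to a limit) or by exploiting the combinatorial freedom in the KKT subgradient on the tie set, but the bookkeeping is genuinely delicate and is where the paper's remark about a ``careful inspection of barycenters'' becomes operative.
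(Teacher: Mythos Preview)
Your construction of $\F=\bigcup_i\F_i$ via halfspaces in the Yatracos features $\phi^{(i)}_j(x)=\mathbbm{1}[q_j(x)\le q_i(x)]$ is exactly the paper's (the paper writes $S_{i,j}=\mathbbm{1}[q_i\ge q_j]$ and $\F_i=\{\mathbbm{1}_{\sum_{j\neq i}h_jS_{i,j}\ge c}\}$), and the VC bound via ``halfspaces in $n$ features'' plus Lemma~\ref{lem:union} is likewise identical (the paper phrases it as sign-rank $\le n$). For $\Q_\F=\Q_\tv$ you also reduce, as the paper does, to the equality $\min_{v\in\Q_\F}h\cdot v=\min_{v\in\Q_\tv}h\cdot v$ via the preceding Claim and the Minimax Theorem.

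Where you differ is in the technical core of that last equality. You argue \emph{primally}: you characterize the barycenter $p^*=\arg\min_p\sum_i h_i\tv(p,q_i)$ as a pointwise weighted $\lambda$-quantile of the $q_j$'s, and then observe that the Yatracos witness $f_i^*=\mathbbm{1}[p^*>q_i]$ is a halfspace in $\phi^{(i)}$ and hence lies in $\F_i$. The paper instead argues \emph{dually}: it rewrites $\max_{f_i:\X\to[0,1]}\min_p(\cdot)$ as a linear program in the variables $(f_i,\lambda)$ and shows by a greedy pointwise analysis that its optimum is attained with each $f_i^*\in\conv(\F_i)$; a second application of Minimax together with linearity then lets it drop from $\conv(\F_i)$ back to $\F_i$. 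The two routes are dual to each other---your Lagrange multiplier is precisely the paper's auxiliary variable $\lambda$---but the paper's formulation handles the tie set $\{p^*=q_i\}$ you worry about more cleanly: on the boundary case it simply takes $f_i^*=t\cdot\mathbbm{1}_{\{\cdot\}}$ for some $t\in[0,1]$, which lies in $\conv(\F_i)$, and no perturbation or delicate subgradient selection is needed. Your barycenter picture is arguably more geometric and explains \emph{why} these threshold functions arise, while the paper's LP argument is more mechanical but sidesteps the degeneracies.
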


\paragraph{Construction of $\F$.}
Consider the Yatracos functions $S_{i,j}:\X\to\{0,1\}$ that are defined by $S_{i,j}(x) = 1$ if and only if~$q_i(x)\geq q_j(x)$,
and define \[\F_i = \{ {\bf 1}_{\sum_{j\neq i} h_j S_{i,j} \geq c} : h_j, c\in \R\}.\]
The class $\F$ is defined by
\[ \F = \cup_i \F_i.\]
See \Cref{fig:majorityschaffe} for an illustration of a function in $\F$.

\begin{figure}
\begin{center}
\includegraphics[width=.75\textwidth]{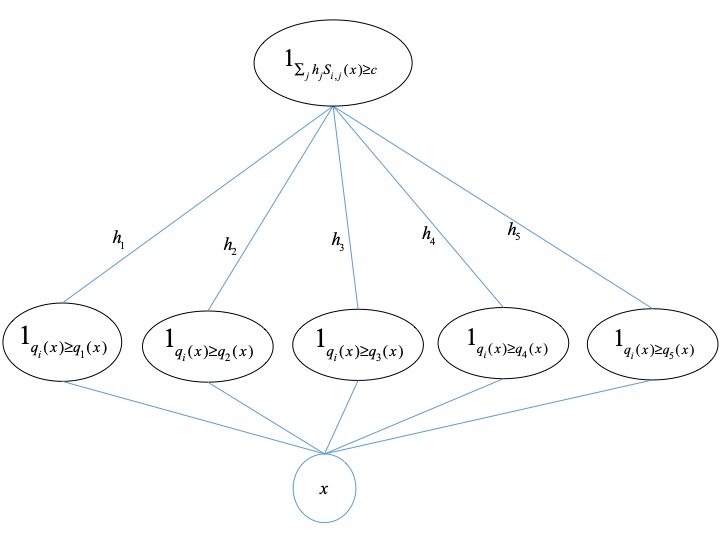}
\end{center}
\caption{An illustration of a function in $\F$.}
\label{fig:majorityschaffe}
\end{figure}

Theorem~\ref{thm:static} follows from the next two lemmas (\Cref{lem:polyt} implies that $\Q_\F=\Q_\tv$ via \Cref{cor:separators}).
\begin{lemma}\label{lem:vc}
$\F$ has VC dimension at most $10n$.
\end{lemma}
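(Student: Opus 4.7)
The plan is to bound the VC dimension of each $\F_i$ separately by viewing its members as affine halfspaces in a low-dimensional feature space, and then combine the $n$ bounds via Lemma~\ref{lem:union}.

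First I would fix an index $i$ and define the feature map $\phi_i : \X \to \{0,1\}^{n-1}$ by $\phi_i(x) = \bigl(S_{i,j}(x)\bigr)_{j \neq i}$. By construction, every $f \in \F_i$ has the form $f(x) = \mathbf{1}\bigl[\langle h, \phi_i(x)\rangle \geq c\bigr]$ for some $h \in \R^{n-1}$ and $c \in \R$, i.e., $f$ is the pullback under $\phi_i$ of an affine halfspace in $\R^{n-1}$. Since the VC dimension of the class of affine halfspaces in $\R^{n-1}$ equals $n$, and composition with a fixed map cannot increase the VC dimension (shattering a set $T \subseteq \X$ by $\F_i$ is witnessed by shattering $\phi_i(T) \subseteq \R^{n-1}$ by halfspaces), we conclude that $\mathrm{VC}(\F_i) \leq n$.

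Now $\F = \bigcup_{i=1}^n \F_i$ is a union of $n$ classes, each of VC dimension at most $n$. Applying Lemma~\ref{lem:union} with $d = n$ yields $\mathrm{VC}(\F) \leq 10n$, as claimed.

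The main thing to be careful about is the first step: one must verify that the features $\phi_i(x)$ are themselves functions of $x$ fixed by the class $\Q$ (and not varying with the choice of $f$), so that shattering by $\F_i$ truly reduces to shattering the image set $\phi_i(T)$ by halfspaces. Since the Yatracos functions $S_{i,j}$ depend only on $q_i,q_j$ and not on $h,c$, this reduction is clean, and no technical subtlety arises beyond citing the $d+1$ bound on the VC dimension of affine halfspaces in $\R^d$ and invoking Lemma~\ref{lem:union}.
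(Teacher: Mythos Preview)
Your proposal is correct and essentially identical to the paper's proof: both show $\mathrm{VC}(\F_i)\le n$ by mapping $x$ to the vector of Yatracos coordinates $(S_{i,j}(x))_{j\neq i}$ and observing that every $f\in\F_i$ becomes a halfspace, then invoke Lemma~\ref{lem:union}. The only cosmetic difference is that the paper homogenizes the affine halfspace by appending a constant coordinate~$1$ (so $\phi(x)\in\R^n$ and the halfspace passes through the origin) and phrases the bound as ``sign-rank $\le n$'', whereas you work directly with affine halfspaces in $\R^{n-1}$; both give $\mathrm{VC}(\F_i)\le n$.
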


\begin{lemma}\label{lem:polyt}
For every $h\geq 0$
\[\min_{v\in\Q_{\tv}} h\cdot v = \min_{v\in\Q_{\F}} h\cdot v\]
\end{lemma}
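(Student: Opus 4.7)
Since $\F\subseteq\{0,1\}^\X$, we have $d_\F\le\tv$ pointwise and therefore $\Q_\tv\subseteq\Q_\F$, which gives the easy direction $\min_{v\in\Q_\tv}h\cdot v\ge\min_{v\in\Q_\F}h\cdot v$. The plan for the reverse inequality is to rewrite both sides as a common saddle-point expression via the Minimax Theorem (permissible under the paper's standing assumption), and then show that the optimal ``test sets'' for the $\tv$ side already live in $\F$.

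Unpacking $\tv(p,q_i)=\sup_{A_i\subseteq\X}(p(A_i)-q_i(A_i))$, pulling the outer sup out over $i$, and swapping with $\min_p$ gives
\[
\min_p\sum_i h_i\,\tv(p,q_i)\;=\;\sup_{(A_i)\in(2^\X)^n}\Bigl[\,\min_{x\in\X}\sum_i h_i\mathbf{1}_{A_i}(x)\;-\;\sum_i h_i\,q_i(A_i)\,\Bigr],
\]
where the inner minimization uses $\min_p\Ex_p[g]=\inf_x g(x)$ (point masses are allowed in $\Delta(\X)$). The identical calculation with $d_\F$ in place of $\tv$ yields the exact same expression, only with the outer supremum restricted to $(A_i)\in\F^n$ (using that $\F$ consists of indicators). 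Hence the lemma reduces to the following claim: the unconstrained supremum on the TV side is already attained by tuples with each $A_i\in\F_i\subseteq\F$.

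For each fixed target $c\ge 0$ I would solve the subproblem of minimizing $\sum_i h_i q_i(A_i)$ over tuples satisfying $\min_x\sum_i h_i\mathbf{1}_{A_i}(x)\ge c$. Representing the $q_i$'s as densities against a common dominating measure (e.g.\ $\sum_j q_j$), this decouples pointwise: at each $x$ one picks a subset $T(x)\subseteq[n]$ of ``active'' indices with $\sum_{i\in T(x)}h_i\ge c$, paying $\sum_{i\in T(x)}h_i\,q_i(x)$. The optimal $T(x)$ is the fractional-knapsack choice---include indices in ascending order of $q_i(x)$ until the cumulative $h$-mass reaches $c$. In the generic (no-ties) case, $N_i(x):=\sum_{j\ne i}h_j S_{i,j}(x)$ equals the cumulative $h$-mass of indices $j$ with $q_j(x)<q_i(x)$, so the greedy rule is exactly $i\in T(x)\iff N_i(x)<c$, i.e.\ $A_i^\star=\{x:N_i(x)<c\}$. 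Because $N_i$ takes only finitely many values and $\F_i$ contains every threshold indicator of $N_i$ in both directions (obtained by freely choosing the defining coefficients, and in particular by negating them), we have $\mathbf{1}_{A_i^\star}\in\F_i$. Taking the supremum over $c$ closes the chain.

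The main obstacle is handling ties among the $q_i(x)$'s: inside a tie class the knapsack may want to include only \emph{part} of the class, which is not directly expressible as a single threshold on $N_i$. I would resolve this by an infinitesimal symmetric perturbation of the $q_i$'s, observing that both $\tv$ and $d_\F$ are continuous in the $q_i$'s (with respect to total variation) so the identity survives in the limit; alternatively, one can refine the Yatracos sets $S_{i,j}$ with an arbitrary tie-breaking order and rerun the argument verbatim. A minor secondary point is making ``$\min_x$'' rigorous for $\X=\R^d$, which is harmless since $p$ may be a Dirac mass.
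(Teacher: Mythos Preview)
Your approach is essentially the same as the paper's: both apply the Minimax Theorem to rewrite $\min_p\sum_i h_i\,\tv(p,q_i)$ as a sup over test functions, then solve the resulting pointwise covering problem greedily (the fractional-knapsack step), and observe that the greedy optimum has the threshold form $\{N_i<c\}$ that defines $\F_i$. The paper works with $[0,1]$-valued $f_i$ throughout and shows the optimum lands in $\conv(\F_i)$, then applies Minimax a second time and uses linearity to pass from $\conv(\F_i)$ to $\F_i$; you instead restrict to indicator functions from the start and aim to land directly in $\F_i$. The skeleton and the key computation are identical.

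Two points deserve care. First, the Minimax swap you perform is over the non-convex set $(2^\X)^n$ of tuples of sets, so it is not covered by the paper's standing Minimax assumption as stated. It \emph{is} true (one can see it via the saddle point of the convexified game: the best response $f_i^*$ to the optimal $p^*$ is an indicator), but you have not justified it; the clean fix is precisely to relax to $[0,1]$-valued $f_i$, which is what the paper does.

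Second, and more substantively, your tie-handling is where the integral restriction bites. Your perturbation argument is not as innocent as stated: the class $\F$ itself is built from the Yatracos sets $S_{i,j}=\{q_i\ge q_j\}$, so perturbing the $q_i$'s perturbs $\F$ as well, and ``continuity of $d_\F$ in the $q_i$'s'' is not a statement about a fixed metric but about a moving one. Likewise, refining $S_{i,j}$ with a tie-breaking order changes $\F_i$ and hence changes the lemma you are proving. The paper's route avoids this entirely: by allowing fractional $f_i$, when the greedy wants to split a tie class it produces a non-indicator function that is nonetheless a convex combination of threshold indicators of $N_i$, hence in $\conv(\F_i)$; no perturbation is needed. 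If you want to keep the purely integral formulation, you would need a direct argument that some optimal integral tuple lies in $\prod_i\F_i$ even in the presence of ties, which you have not supplied.
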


\begin{proof}[Proof of Lemma~\ref{lem:vc}]
We claim that the VC dimension of each $\F_i$ is at most $n$,
this will finish the proof by Lemma~\ref{lem:union}.
To see that $\F_i$ has VC dimension at most $n$,
we show that its {\it sign-rank} (defined below) is at most $n$.
This implies the bound on the VC dimension,
since the VC dimension is at most the sign-rank (see e.g.~\citep{Alon16signrank}).

The sign-rank of $\F_i$ is the minimal $d$
such that there is a representation of $\X$ using $d$-dimensional vectors
so that each $f\in\F_i$ corresponds to a $d$-dimensional half-space.
Formally, if there is a mapping $\phi:\X\to\R^d$
such that for every $f\in \F_i$ there is $u\in \R^d$
such that $f(x)=1$ if and only if $u\cdot \phi(x) \geq 0$.

To see that the sign-rank of $\F_i$ is at most $n$
consider the mapping
\[\phi(x) = \bigl(S_{i,1}(x),\ldots S_{i,i-1}(x),S_{i,i+1}(x),\ldots S_{i,n}(x) , 1\bigr)\in\R^n.\]
For every $f\in \F$ with $f={\bf 1}_{\sum_{j\neq i} h_j S_{i,j} \geq c}$
pick $v\in \R^n$ where the first $n-1$ coordinates of $v$
are the $h_j$'s for $j\neq i$, and the last coordinate is $-c$.
The half-space defined by $u$ indeed corresponds to $f$:
\[ f(x) = 1 \iff {\bf 1}_{\sum_{j\neq i} h_j S_{i,j} \geq c}(x) = 1 \iff   \sum_{j\neq i} h_j S_{i,j} \geq c \iff v\cdot \phi(x) \geq 0.\]

\end{proof}

\begin{proof}[Proof of Lemma~\ref{lem:polyt}]

\Cref{lem:polyt} follows by a careful inspection of the vertices of $\Q_\tv$.
	This inspection involves a somewhat technical analysis of the solutions of a related linear program.
	We provide the proof in \Cref{sec:lempolyt}.
\end{proof}

\subsection{Proofs of \Cref{thm:ubdensities} and \Cref{thm:ubfinite}}\label{sec:ubproofs}

\Cref{thm:ubdensities} and \Cref{thm:ubfinite} follow from \Cref{thm:sq} and \Cref{thm:static},
	combined	with results in Adaptive Data Analysis.
	We refer the reader to the survey by~\cite{Dwork15adaptive} for a detailed introduction.

First, the $O(\frac{n+\log(1/\delta)}{\eps^2})$ bound in \Cref{thm:ubdensities} is a direct corollary
	of the static algorithm from the previous section (see the discussion prior to \Cref{thm:static}'s statement).
	The second bound in \Cref{thm:ubdensities} and the bound in \Cref{thm:ubfinite} follows from the two adaptive algorithms $A_1,A_2$ in \Cref{thm:sq},
	as we explain next.

In order for Algorithms $A_1,A_2$ to output a valid distribution $q$,
	it is required that all of the statistical queries they use are answered with the desired accuracy.
	Recall that $A_1$ uses $2n^2/\eps$ queries and requires accuracy of $\eps/4$ per query
	and that $A_2$ uses $2n\log n/\eps$ queries and require accuracy of $\eps/2\log n$ per query.
	To achieve this, one needs to draw enough samples from the target distribution $p$
	that suffice for a good-enough estimate.
	A natural way is to estimate each of the statistical queries by its empirical average.
	However, since the algorithm is adaptive
	(i.e.\ the choice of the statistical query used in iteration $k$ depends on the previous queries and their estimates),
	this may require a large number of samples from $p$.
	In particular, there are settings in which if one uses the empirical averages as estimates
	then $\Omega(k/\eps^2)$ samples are needed in order to answer $k$ adaptive queries adaptively
	Luckily, the domain of {\it Adaptive Data Analysis} has developed clever estimates
	which achieve significant reductions in the sample complexity.
	In a nutshell, the idea is to return a noisy version of the empirical averages,
	and the high-level intuition is that the noise stabilizes this random process and hence makes it more concentrated.

We will use the following results due to~\cite{Bassily16stability}, which improve upon results from~\cite{Dwork15adaptive}.
%	
%	
%	By now, there are various upper bounds on the number of samples
%	that are enough to perform $k$ (adaptive) estimates of
%	expectations of ``$X\to [0,1]$'' functions, see~\cite{} and references within for more details.
%	In particular, they yield the following upper bounds on the sample complexity in our context:
\begin{theorem}[Infinite domain, Corollay 6.1 in~\cite{Bassily16stability}]\label{thm:adainf}
Let $p$ be the target distribution.
	Then, there is a mechanism that given $n=n(\eps,\delta)$ samples from $p$,
	 answers $k$ adaptive statistical queries such that with probability at least $1-\delta$
	 each of the provided estimates is $\eps$-accurate, and
	\[n(\eps,\delta) = O\Bigl(\frac{\sqrt{k\log\log k}\log^{3/2}(1/\eps\delta)}{\eps^2}\Bigr).\]
%
%
%There are estimation mechanisms that, with probability of at least $1-\delta$,
%provide accuracy of at least~$\frac{\eps}{2\log n}$, simultaneously
%to all expectations estimated by the algorithm by using at most
%\[\min\Bigl\{\tilde O\Bigl(\frac{\sqrt{n}}{\eps^{5/2}}\log(1/\delta)\Bigr), \tilde O\Bigl(\frac{\sqrt{\log\lvert X\rvert}\log n}{\eps^{7/2}}\Bigr)\Bigr\}\]
%samples from the target distribution $p^*$.
\end{theorem}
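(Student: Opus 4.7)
My plan is to prove this via the now-standard differential-privacy route to adaptive data analysis, which is exactly the framework Bassily et al.\ refine. The key conceptual ingredient is the \emph{transfer theorem}: any mechanism that is simultaneously differentially private and accurate on the empirical sample is automatically accurate on the underlying population, with only a mild blow-up in the failure probability. The second ingredient is advanced composition for differential privacy, which lets the privacy cost of $k$ adaptively chosen queries grow like $\sqrt{k}$ rather than $k$. Together these two facts yield a $\sqrt{k}/\eps^2$ sample complexity, and the $\log\log k$ factor appears only from a careful balancing of parameters.

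Concretely, I would proceed as follows. First, build the per-query mechanism: on a statistical query $f:\X\to[0,1]$, compute the empirical mean $\frac{1}{n}\sum_{j}f(x_j)$ and release a noisy version $\hat\mu=\frac{1}{n}\sum_j f(x_j)+Z$, where $Z$ is drawn from a Gaussian (or Laplace) with scale calibrated to give per-query $(\eps_0,\delta_0)$-DP; since $f$ has sensitivity $1/n$, a noise scale of $O(1/(n\eps_0))$ suffices. Second, apply advanced composition (or the sharper CDP / zCDP composition used in \cite{Bassily16stability}): answering $k$ queries this way is $(\eps_0\sqrt{k\log(1/\delta_0)},\,k\delta_0)$-DP overall. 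Third, invoke the transfer theorem: with high probability, each released answer is within $O(\eps_0\sqrt{\log(k/\delta)})$ of the true population expectation, provided the empirical means are themselves accurate to that level (which happens by a standard Hoeffding / uniform bound on the $n$ samples).

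Finally, balance the parameters. To obtain per-answer accuracy $\eps$ with probability $1-\delta$, it suffices to have sample-accuracy $\Theta(\eps)$ from Hoeffding (needing $n\gtrsim \log(k/\delta)/\eps^2$) and noise $\Theta(\eps)$ from the Gaussian, i.e., $1/(n\eps_0)\lesssim \eps$; the DP-to-generalization transfer requires the total privacy loss to satisfy $\eps_{\mathrm{total}}\lesssim \eps$, giving $\eps_0\lesssim \eps/\sqrt{k\log(1/\delta)}$. Substituting yields $n\gtrsim \sqrt{k}\cdot \mathrm{polylog}(1/\eps\delta)/\eps^2$, and the refined analysis in \cite{Bassily16stability} trims the polylog down to exactly $\sqrt{\log\log k}\cdot \log^{3/2}(1/\eps\delta)$.

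The main obstacle is the transfer theorem in the sharp form needed: naive arguments via differential privacy lose extra log factors or require very small $\delta_0$, which inflates the sample complexity. The improvement in \cite{Bassily16stability} over \cite{Dwork15adaptive} comes from working with concentrated (or approximate max-information) notions of privacy that compose more tightly and from a careful ``monitor'' argument to drive the failure probability down without amplifying the noise. I would treat this refined transfer theorem as the crucial black box to invoke; once it is in place, the rest of the proof is a routine optimization of the DP parameters $\eps_0,\delta_0$ against the target accuracy $\eps$ and the number of queries $k$.
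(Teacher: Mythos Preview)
The paper does not prove this statement at all: it is quoted verbatim as Corollary~6.1 of \cite{Bassily16stability} and used purely as a black box in \Cref{sec:ubproofs}. So there is no ``paper's own proof'' to compare against; the paper's approach is simply to cite the result.

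Your sketch is a faithful outline of the argument in \cite{Bassily16stability} (noisy empirical averages for per-query privacy, advanced/CDP composition to control the cumulative privacy loss over $k$ adaptive queries, and a transfer/monitor theorem to convert sample-accuracy plus privacy into population-accuracy), and the parameter-balancing you describe is the right shape. For the purposes of this paper, however, none of that is needed or expected: the intended ``proof'' is a one-line citation, and reproducing the Bassily et al.\ argument here would be out of scope.
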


\begin{theorem}[Finite domain Corollary 6.3 in~\cite{Bassily16stability}]\label{thm:adafin}
Let $p$ be the target distribution.
	Then, there is a mechanism that given $n=n(\eps,\delta)$ samples from $p$,
	 answers $k$ adaptive statistical queries such that with probability at least $1-\delta$
	 each of the provided estimates is $\eps$-accurate, and
	\[n(\eps,\delta) = O\Bigl(\frac{\sqrt{\log \lvert \X\rvert}\log k\log^{3/2}(1/\eps\delta)}{\eps^3}\Bigr).\]

Algorithm $A_2$ combined with \Cref{thm:adainf} yields the $\tilde O(\sqrt{n})$ dependence in \Cref{thm:ubdensities},
and $A_1$ combined with \Cref{thm:adafin} yields \Cref{thm:ubfinite}.
%
%
%There are estimation mechanisms that, with probability of at least $1-\delta$,
%provide accuracy of at least~$\frac{\eps}{2\log n}$, simultaneously
%to all expectations estimated by the algorithm by using at most
%\[\min\Bigl\{\tilde O\Bigl(\frac{\sqrt{n}}{\eps^{5/2}}\log(1/\delta)\Bigr), \tilde O\Bigl(\frac{\sqrt{\log\lvert X\rvert}\log n}{\eps^{7/2}}\Bigr)\Bigr\}\]
%samples from the target distribution $p^*$.
\end{theorem}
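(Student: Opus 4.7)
The plan is to verify the two claimed sample-complexity bounds by plugging the specifications of $A_1$ and $A_2$ (from \Cref{thm:sq}) into the adaptive data analysis mechanisms of \Cref{thm:adainf} and \Cref{thm:adafin} as black boxes. The argument is essentially a substitution, but it must be done carefully because $A_1$ and $A_2$ make different trade-offs between the number of queries and the required per-query accuracy, and this trade-off determines whether the resulting bounds have the intended dependence on $n$ and $\eps$.

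For the $\tilde O(\sqrt{n}\log^{3/2}(1/\delta)/\eps^{5/2})$ bound in \Cref{thm:ubdensities}, I would invoke $A_2$, which issues $k = 2n\log n/\eps$ adaptive statistical queries and requires accuracy $\eps_0 = \eps/(2\log n)$ on each. Feeding $(\eps_0, \delta, k)$ into \Cref{thm:adainf}, the $\sqrt{k}$ factor contributes $\sqrt{n/\eps}$, the $1/\eps_0^2$ factor contributes $\log^2 n/\eps^2$, and the $\log\log k$ and $\log^{3/2}(1/(\eps_0\delta))$ factors are polylogarithmic in $n, 1/\eps, 1/\delta$. Collecting these and absorbing polylogs in $\tilde O$ reproduces the claimed rate.

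For \Cref{thm:ubfinite}, I would instead invoke $A_1$, whose per-query accuracy $\eps_0 = \eps/4$ does not degrade with $n$; this is what keeps the final dependence on $\lvert \X\rvert$ polylogarithmic rather than polynomial. Substituting $k = 2n^2/\eps$ and $\eps_0 = \eps/4$ directly into \Cref{thm:adafin} produces $O\bigl(\log(n/\eps)\sqrt{\log\lvert \X\rvert}\log^{3/2}(1/\delta)/\eps^3\bigr)$, matching the stated bound.

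In both cases correctness follows by composition: the ADA mechanism guarantees that with probability $1-\delta$ every returned estimate meets the tolerance required by $A_1$ or $A_2$, and conditioned on this event \Cref{thm:sq} guarantees the output $q$ satisfies $v(q)\leq v(p)+\eps$, so \Cref{lem:termination} yields $\tv(q,p)\leq 2\opt+\eps$. The only non-mechanical step is balancing $A_2$'s strict per-query tolerance against the $1/\eps_0^2$ factor in \Cref{thm:adainf}: it is exactly this balance that is responsible for the $1/\eps^{5/2}$ (rather than $1/\eps^2$) dependence in the final bound, and it is also the reason one cannot naively replace $A_2$ by a version with fewer queries at lower accuracy.
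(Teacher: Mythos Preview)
Your proposal is correct and follows exactly the pairing the paper uses: $A_2$ with \Cref{thm:adainf} for the $\tilde O(\sqrt{n})$ bound in \Cref{thm:ubdensities}, and $A_1$ with \Cref{thm:adafin} for \Cref{thm:ubfinite}. In fact you carry out the substitutions more explicitly than the paper, which essentially asserts the result; your calculation that $\sqrt{k}/\eps_0^2 \sim \sqrt{n}/\eps^{5/2}$ for $A_2$ and $\log k/\eps_0^3 \sim \log(n/\eps)/\eps^3$ for $A_1$ is exactly what is needed. Two small expository remarks: the reason $A_1$ is preferable for \Cref{thm:ubfinite} is not that it keeps the $\lvert\X\rvert$ dependence polylogarithmic (that dependence is $\sqrt{\log\lvert\X\rvert}$ regardless) but that its constant per-query tolerance $\eps/4$ avoids extra $\log n$ factors entering through $1/\eps_0^3$; and the $\eps^{5/2}$ in the $\sqrt{n}$ bound comes from the $1/\sqrt{\eps}$ hidden in $\sqrt{k}$, not from $A_2$'s tighter tolerance (which contributes only polylog factors in $n$).
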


\section{Lower Bounds}\label{sec:lb}
As discussed in the introduction, any finite $\Q$ can be properly $\alpha=3$-learned by Yatracos' algorithm.
We show that $\alpha=3$ is optimal:
\begin{theorem}[Lower bound for infinite domains]\label{thm:lbdensities}
For arbitrarily small $0<\beta < 1$ there is a class $\Q=\Q(\beta) = \{q_1,q_2\}$ of two densities such that the following holds.
Let $A$ be a {(possibly randomized)} proper learning algorithm for $\Q$ and let $m$ be a sample complexity bound.
Then, there exists a target distribution $p$ such that {$\opt=\frac{1}{2}\beta$} and if $A$ gets at most $m$ samples from $p$ as an input then
\[
TV(q,p) > 3\cdot\frac{1}{2}\beta -2\beta^2 = (3-6\beta)\opt + \beta^2\,,
\]
with probability at least $\frac{1}{3}$.
%In particular, put $\alpha < 3$, then no proper algorithm can achieve a guarantee of $\alpha\cdot\opt + \eps$ for $\eps< 1-\alpha/3$.
\end{theorem}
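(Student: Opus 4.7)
The plan is to apply Le Cam's method to a pair of adversarial \emph{priors} over target distributions, rather than to two individual targets, and to control the total variation of the mixtures' $m$-fold products via a birthday-paradox argument. Fix $\Q(\beta) = \{q_1, q_2\}$ to be two distributions on an infinite domain $\X$ (e.g.\ uniform on two disjoint subsets of $\mathbb{N}$, or on two disjoint intervals of $[0,1]$), chosen so that there is always enough ``fresh'' room inside $\X$ for the adversary to build hard instances at any sample budget $m$.

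For any proper algorithm $A$ with sample complexity $m$, the plan is to construct two priors $\nu_1^m, \nu_2^m$ over target distributions such that: \textbf{(P1)} every $p \in \operatorname{supp}(\nu_i^m)$ satisfies $\opt(p) = \beta$, uniquely attained at $q_i$, with $\tv(q_{3-i}, p) \ge 3\beta - 2\beta^2$; and \textbf{(P2)} the mixtures $\mathcal{M}_i := \int p^{\otimes m}\, d\nu_i^m(p)$ satisfy $\tv(\mathcal{M}_1, \mathcal{M}_2) \le 1/3$. Given (P1)--(P2), the theorem follows in a single Le Cam step: setting $a_i := \Pr_{p \sim \nu_i^m,\, S \sim p^{\otimes m}}[A(S) = q_1]$, the data-processing inequality gives $|a_1 - a_2| \le \tv(\mathcal{M}_1, \mathcal{M}_2) \le 1/3$, so we cannot have $a_1 > 2/3$ and $a_2 < 1/3$ simultaneously. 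Hence for some $i$, the algorithm outputs the wrong distribution with probability at least $1/3$ on average over $\nu_i^m$, and by averaging there exists a specific $p \in \operatorname{supp}(\nu_i^m)$ with this property; the wrong output incurs error at least $3\beta - 2\beta^2$, as required.

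The main obstacle is property (P2). By the triangle inequality, (P1) forces every individual pair $(p^{(1)}, p^{(2)}) \in \operatorname{supp}(\nu_1^m) \times \operatorname{supp}(\nu_2^m)$ to satisfy $\tv(p^{(1)}, p^{(2)}) \ge 2\beta(1-\beta)$, a constant independent of $m$; Bretagnolle--Huber then implies that the individual product TVs tend to $1$ as $m \to \infty$, ruling out a naive two-point reduction. The birthday-paradox fix, adapted from~\cite{Chan14histograms}, is to concentrate the ``distinguishing'' mass of each $p \in \operatorname{supp}(\nu_i^m)$ uniformly on a \emph{random} subset of cardinality $N = N(m) \gg m^2$ inside a fresh subregion of $\X$, and average over that subset inside $\nu_i^m$. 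A birthday-paradox calculation then shows $m$ i.i.d.\ samples from any such $p$ are pairwise distinct with probability $1 - O(m^2/N)$; conditional on no collisions the sample tuple carries only permutation-invariant ``type'' information about the underlying subset, and arranging the two priors so that their induced type distributions agree gives $\tv(\mathcal{M}_1, \mathcal{M}_2) = O(m^2/N) \le 1/3$ for $N$ taken large enough. Reconciling this ``spread out'' structure with the precise distance requirements of (P1) is the technical heart of the argument.
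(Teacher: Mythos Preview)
Your plan is essentially the paper's proof: Le Cam's method applied to two priors $\nu_1^m,\nu_2^m$ (the paper's $\D_1,\D_2$) satisfying exactly your (P1)--(P2), with (P2) obtained by a birthday-paradox argument (partition the domain into $2N\gg m^2$ bins, build each $p$ from a random size-$k$ subset of bins, and condition on the collision-free event, under which both mixtures coincide with $U^{\otimes m}$ conditioned on the same event). The only caveat is your throwaway suggestion of $q_1,q_2$ uniform on \emph{disjoint} sets: the paper instead takes $q_1,q_2$ to be the overlapping step densities $1\mp\beta$ on the two halves of $[0,1]$, which is what makes the arithmetic of (P1) (namely $\tv(q_i,p)=\beta$ and $\tv(q_{3-i},p)=3\beta-\tfrac{2\beta^2}{1+\beta}$) come out cleanly for every $\beta<1$.
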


The following corollary summarizes that $\alpha=3$ is the threshold for proper learning.
\begin{corollary}
For every $\alpha<3$ there exists $\eps_0>0$ and a class $\Q$ containing two densities such that 
no proper algorithm can agnostically learn $\Q$
 with a guarantee of at most
\[\alpha\cdot\opt + \eps_0,\]
and success probability $\delta>2/3$.
\end{corollary}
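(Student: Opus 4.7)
The corollary is a direct consequence of \Cref{thm:lbdensities} obtained by tuning the parameter $\beta$ to the gap $3-\alpha$. The plan is to translate the multiplicative-plus-additive lower bound $3\beta - 2\beta^2 = (3-3\beta)\opt + \beta^2$ of the theorem into a statement in the ``$\alpha\opt + \eps_0$'' form of the corollary, by choosing $\beta$ small enough that $3-3\beta$ still strictly exceeds $\alpha$, and then absorbing the leftover slack into $\eps_0$.

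Concretely, given $\alpha < 3$, I would set $\beta := (3-\alpha)/6 \in (0,1)$, so that $3 - 3\beta = (3+\alpha)/2 > \alpha$ with a gap of $(3-\alpha)/2$. Applied to the fixed target ($\opt = \beta$) produced by \Cref{thm:lbdensities}, this gives
\[
3\beta - 2\beta^2 - \alpha\beta \;=\; \beta\bigl(3 - \alpha - 2\beta\bigr) \;=\; \beta\cdot \tfrac{2(3-\alpha)}{3} \;>\; 0,
\]
and I would choose $\eps_0$ to be any positive number strictly less than this quantity, say $\eps_0 := \beta(3-\alpha)/3 = (3-\alpha)^2/18$.

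The deduction then proceeds by contradiction: suppose some proper algorithm $A$ achieves the guarantee $\tv(q,p) \le \alpha\cdot\opt + \eps_0$ with probability at least $\delta > 2/3$ for every target $p$, using sample complexity $m_0 := m(\eps_0, 1-\delta)$. Instantiate \Cref{thm:lbdensities} with the class $\Q(\beta)$ and the bound $m_0$; it supplies a target distribution $p$ with $\opt = \beta$ on which the output $q = A(S)$ of $A$ satisfies $\tv(q,p) \ge 3\beta - 2\beta^2$ with probability at least $1/3$. By our choice of $\eps_0$, however, $3\beta - 2\beta^2 > \alpha\beta + \eps_0 = \alpha\cdot\opt + \eps_0$, so the event guaranteed by the theorem is disjoint from the event that $A$ succeeds. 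Hence $A$ fails with probability $\ge 1/3$, contradicting the assumed success probability $\delta > 2/3$.

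There is no substantive obstacle here beyond ensuring the arithmetic lines up: $\beta$ must be small enough that both $3-3\beta > \alpha$ and $\beta \in (0,1)$, which is automatic for $\alpha \in [0,3)$, and $\eps_0$ must be chosen in the open interval $(0,\beta(3-\alpha-2\beta))$. The only step that requires a line of justification is the observation that the theorem's ``failure probability at least $1/3$'' matches the corollary's ``success probability at most $2/3$'', which is immediate.
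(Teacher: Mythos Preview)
Your proof is correct and follows essentially the same approach as the paper: both deduce the corollary directly from \Cref{thm:lbdensities} by choosing $\beta$ as a function of the gap $3-\alpha$ and reading off $\eps_0$. The paper makes the slightly slicker choice $\beta = 1-\alpha/3$, $\eps_0 = \beta^2$, so that $(3-3\beta)\opt + \beta^2 = \alpha\cdot\opt + \eps_0$ exactly; your smaller $\beta$ leaves a strict gap $3\beta - 2\beta^2 > \alpha\cdot\opt + \eps_0$, which has the minor advantage of cleanly separating the ``failure'' event from the ``$\le \alpha\cdot\opt+\eps_0$'' success event without worrying about equality.
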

\begin{proof}
Let $\alpha<3$. The proof follows from \Cref{thm:lbdensities} by picking  $0< \beta <  \frac{3-\alpha}{6}$ , setting $\eps_0=\beta^2$, and noting that $(3-6\beta)\opt + \beta^2 = \alpha\cdot\opt + \eps_0$.
\end{proof}

%For example, say $\alpha=2.9$, then no proper algorithm can achieve an $\alpha\cdot\opt + \eps$ guarantee where $\eps< 0.01$.

%given any $\alpha<3$, by picking $\epsilon=1-\alpha/3$ the above theorem yields a lower bound of  $3\opt-2\opt^2\ge 3\opt-3\opt^2 = \alpha\cdot\opt$.

For finite domains we get the next version of \Cref{thm:lbdensities} which gives a quantitative sample complexity lower bound.

%\begin{theorem}[Lower bound for finite domains]\label{thm:lbfinite}
%Let $\X$ be a domain of size $N$.
%Then, for every $\beta < 1$ there is a class $\Q=\Q(\beta) = \{q_1,q_2\}$ of two densities such that the following holds.
%Let $A$ be a {(possibly randomized)} proper learning algorithm for $\Q$.
%Then, there exists a target distribution $p$ such that $\opt=\beta$ and if $A$ gets at most $\sqrt{N}$ samples from $p$ as an input then
%\[
%TV(q,p) \ge 3\beta -2\beta^2 = (3-3\beta)\opt + \beta^2\,,
%\]
%with probability at least  $\frac{1}{3}$.
%\end{theorem}

\begin{theorem}[Lower bound for finite domains]\label{thm:lbfinite}
{Let $\beta\in (0,1)$ such that $\frac{1+\beta}{\beta}\in\mathbb{N}$ (note that there are arbitrarily small such $\beta$'s), and let $\X$ be a domain of size $M > 1/\beta$.}
Then, there exist two densities 
$\Q=\{q_1,q_2\}$ over $\X$ such that the following holds.
For any (possibly randomized) proper learning algorithm $A$ for~$\Q$,
there exists a target distribution~$p$ with {$\opt=\frac{1}{2}\beta$} such that,
if $A$ receives at most {$\sqrt{M\beta}$} samples from $p$, then
with probability at least~$\tfrac13$ the returned hypothesis $q$ satisfies
\[
  \tv(q,p)
  \;>\;
3\cdot  \frac{1}{2}\beta - 2\beta^2
  \;=\;
  (3-6\beta)\opt + \beta^2 .
\]
\end{theorem}

We will make use of the following lemma which is a simple generalization of Le Cam's Lemma (see \cite{yu1997assouad}, Lemma~1)
\begin{lemma}\label{le:lecam}
Let $\D_1$ and $\D_2$ be two families of probability distributions, $\D_i^{\oplus m}$ denotes the distribution obtained by sampling $p\sim\D_i$ (assuming some given fixed distribution over $\D_i$) and then drawing~$m$ independent samples from $p$. Consider an algorithm (which can be randomized) that determines, given $m$ i.i.d.\ examples from some $p\in\D_1\cup \D_2$, whether $p\in\D_1$ or $p\in\D_2$. Then such an algorithm will  have a probability of making a mistake  lower bounded by
\[
\frac{1}{2}\left(1-\tv(\D_1^{\oplus m},\D_2^{\oplus m})\right)
\]
\end{lemma}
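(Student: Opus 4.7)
The plan is to use the standard Le Cam / two-point Bayesian reduction, with the only novelty being that the ``points'' are now mixtures rather than single distributions. Set up the Bayesian experiment in which nature first picks an index $i\in\{1,2\}$ uniformly at random, then samples $p$ from the prescribed prior on $\D_i$, and finally draws an i.i.d.\ sample $S\in\X^m$ from $p$. By the definition of $\D_i^{\oplus m}$, the conditional distribution of $S$ given $i$ is exactly $\D_i^{\oplus m}$, so the marginal distribution of $S$ is the equal mixture $\tfrac12(\D_1^{\oplus m}+\D_2^{\oplus m})$.

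Next I would invoke the variational form of total variation to control the Bayes error of any deterministic test. For any two distributions $P,Q$ on $\X^m$ and any deterministic $T:\X^m\to\{1,2\}$,
\[
\Pr_{S\sim P}[T(S)=2]+\Pr_{S\sim Q}[T(S)=1] \;\geq\; 1-\tv(P,Q),
\]
which is immediate from $\tv(P,Q)=\sup_A |P(A)-Q(A)|\geq P(T^{-1}(1))-Q(T^{-1}(1))$. Applying this to $P=\D_1^{\oplus m}$, $Q=\D_2^{\oplus m}$ and dividing by $2$, the average error of $T$ in the Bayesian experiment is at least $\tfrac12(1-\tv(\D_1^{\oplus m},\D_2^{\oplus m}))$.

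Then I would lift this bound to randomized algorithms and to the worst case. A randomized algorithm $A$ can be viewed as a deterministic map $T(S,R)$ where $R$ is independent auxiliary randomness; conditioning on $R=r$ gives a deterministic test to which the previous inequality applies, and averaging over $R$ preserves it. Finally, the resulting average error is an average (under the priors and the uniform choice of $i$) of the per-$p$ error probabilities $\Pr[A \text{ errs} \mid p]$ for $p\in\D_1\cup\D_2$; since an average is at most the supremum, there must exist some $p\in\D_1\cup\D_2$ for which the algorithm errs with probability at least $\tfrac12(1-\tv(\D_1^{\oplus m},\D_2^{\oplus m}))$, which is what the lemma asserts.

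There is essentially no hard step here; this is a textbook Le Cam argument, and the only mild subtlety is keeping track that the ``null'' and ``alternative'' are themselves mixtures over families. That subtlety is handled entirely by absorbing the priors into the definitions of $\D_1^{\oplus m}$ and $\D_2^{\oplus m}$ at the outset, after which the classical two-hypothesis argument goes through verbatim.
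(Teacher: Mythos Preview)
Your proposal is correct and follows essentially the same approach as the paper: set up the uniform Bayesian prior over $i\in\{1,2\}$ (and then over $\D_i$), bound the Bayes error of a deterministic test by $\tfrac{1}{2}(1-\tv(\D_1^{\oplus m},\D_2^{\oplus m}))$ via the variational definition of $\tv$, extend to randomized algorithms by averaging over the internal randomness, and pass from average to worst-case error. The paper's proof is slightly terser but identical in substance.
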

\begin{proof}
We first assume that the algorithm is deterministic.
Any deterministic algorithm deciding whether $p$ comes from $\D_1$ or $\D_2$ is associated with a set $A\subseteq \X^m$ (the set such that if the sample falls in it, it decides $i=1$, and $i=2$ otherwise). The worst-case probability of the algorithm to err is given by
\[
\max\paren*{\max_{p\in\D_2} p^m(A), \max_{p\in\D_1} p^m(\overline{A})}
\]
which can be lower bounded by the expectation under first choosing between $i=1$ and $i=2$ with probability $1/2$ and then picking $p\sim \D_i$:
\[
\frac{1}{2}\left(E_{p\sim\D_1} p^m(A) + E_{p\sim\D_2} p^m(\overline{A}) \right) = \frac{1}{2}\left(1+\D_1^{\oplus m}(A) - \D_2^{\oplus m}(A)\right) \ge \frac{1}{2}\left(1-\tv(\D_1^{\oplus m},\D_2^{\oplus m})\right)\,.
\]
If the algorithm is randomized then it may pick $A$ randomly,
so there is an additional expectation with respect to the distribution over sets $A$ which also leads to the same lower bound.
\end{proof}

The following lemma is of independent interest and can be seen as a chain rule for total variation. It essentially says that two distributions are close if there exists an event $E$ with large probability under each of those distributions and such that, conditioned on this event, the two probability distributions are close.
\begin{lemma}\label{le:chain}
Given two probability distributions $P,Q$ on a domain $\X$ and an event $E\subset \X$, denoting by $P_{|E}$ and $\Q_{|E}$ the corresponding conditional distributions (i.e. $P_{|E}(A):= P(A|E)$), we have
\[
\tv(P,Q) \le \tv(P_{|E},Q_{|E}) + 2P(\overline{E})+2Q(\overline{E})
\]
\end{lemma}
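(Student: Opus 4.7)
The plan is to prove the inequality by directly bounding $|P(A)-Q(A)|$ for an arbitrary measurable $A\subseteq \X$, then taking the supremum over $A$ to recover $\tv(P,Q)$. The natural decomposition is to split $A$ into $A\cap E$ and $A\cap \overline{E}$, using the identity $P(A\cap E)=P(E)\cdot P_{|E}(A)$ (and likewise for $Q$), which introduces the conditional measures appearing on the right-hand side.

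I would first write $P(A)-Q(A) = \bigl[P(E)P_{|E}(A)-Q(E)Q_{|E}(A)\bigr]+\bigl[P(A\cap\overline{E})-Q(A\cap\overline{E})\bigr]$ and treat the two brackets separately. For the ``off-$E$'' bracket, both $P(A\cap \overline{E})$ and $Q(A\cap\overline{E})$ are nonnegative and bounded respectively by $P(\overline{E})$ and $Q(\overline{E})$, so their difference is at most $P(\overline{E})+Q(\overline{E})$ in absolute value. For the ``on-$E$'' bracket, I would use the algebraic identity
\[
P(E)P_{|E}(A)-Q(E)Q_{|E}(A) = P(E)\bigl(P_{|E}(A)-Q_{|E}(A)\bigr) + \bigl(P(E)-Q(E)\bigr)Q_{|E}(A),
\]
and then bound each term: the first by $P(E)\cdot \tv(P_{|E},Q_{|E})\le \tv(P_{|E},Q_{|E})$ (since $P(E)\le 1$), and the second in absolute value by $|P(E)-Q(E)|=|P(\overline{E})-Q(\overline{E})|\le P(\overline{E})+Q(\overline{E})$.

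Combining the two pieces via the triangle inequality yields
\[
|P(A)-Q(A)|\le \tv(P_{|E},Q_{|E})+2P(\overline{E})+2Q(\overline{E}),
\]
and taking the supremum over $A$ finishes the proof. There is no real obstacle here; the only subtlety is doing the bookkeeping carefully so that each $P(\overline{E})$ (resp.\ $Q(\overline{E})$) term is counted exactly twice and the factor of $P(E)$ in front of $\tv(P_{|E},Q_{|E})$ is discarded cleanly. One could alternatively use the $L_1$ formulation $\tv(P,Q)=\tfrac{1}{2}\|dP-dQ\|_1$ and split the integral into $E$ and $\overline{E}$, which gives the same bound, but the event-based calculation above is more direct given the statement.
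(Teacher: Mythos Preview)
Your proof is correct and follows essentially the same approach as the paper: split $A$ into $A\cap E$ and $A\cap\overline{E}$, use the identity $P(E)P_{|E}(A)-Q(E)Q_{|E}(A)=P(E)\bigl(P_{|E}(A)-Q_{|E}(A)\bigr)+\bigl(P(E)-Q(E)\bigr)Q_{|E}(A)$, and bound each piece exactly as you describe. The only cosmetic difference is that the paper carries the supremum through each line whereas you fix $A$ and take the supremum at the end.
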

\begin{proof}
\begin{eqnarray*}
\tv(P,Q) = \sup_{A} \abs*{P(A)-Q(A)} &\le&
\sup_{A} \abs*{P(A\cap E)-Q(A\cap E)} + \sup_{A} \abs*{P(A\cap \overline{E})-Q(A\cap \overline{E})}\\
&\le&
 \sup_{A} \abs*{P(E)\paren*{P(A| E)-Q(A| E)} + Q(A|E)\paren*{P(E)-Q(E)}}\\
 &~& + P(\overline{E})+Q(\overline{E})\\
&\le&
 P(E)\sup_{A} \abs*{P(A| E)-Q(A| E)} + \abs{P(E)-Q(E)}\\
  &~&+ P(\overline{E})+Q(\overline{E})\\
 &\le& \tv(P_{|E},Q_{|E}) + \abs{P(E)-Q(E)} + P(\overline{E})+Q(\overline{E})\\
&=& \tv(P_{|E},Q_{|E}) + \abs{P(\overline{E})-Q(\overline{E})} + P(\overline{E})+Q(\overline{E})\\
&\le& \tv(P_{|E},Q_{|E}) + 2P(\overline{E})+2Q(\overline{E})\\
\end{eqnarray*}
\end{proof}

\begin{proof}[Proof of \Cref{thm:lbdensities} and \Cref{thm:lbfinite}]
We first prove \Cref{thm:lbdensities} and later note how the proof can be modified to obtain \Cref{thm:lbfinite}.

Fix $\beta \in (0,1)$ and an integer $m \ge 1$.  
We will define two fixed densities $q_1, q_2$ over $[0,1]$, and two finite families~$\mathcal{D}_1, \mathcal{D}_2$ of probability densities over $[0,1]$, such that the following
three properties hold.

%\begin{enumerate}
%\item[(i)] For every $p \in \mathcal{D}_1$ we have
%\[
%\tv(q_1,p)=\beta
%\qquad\text{and}\qquad
%\tv(q_2,p) \;\ge\; 3\beta - 2\beta^2 .
%\]
%
%\item[(ii)] For every $p \in \mathcal{D}_2$ we have
%\[
%\tv(q_2,p)=\beta
%\qquad\text{and}\qquad
%\tv(q_1,p) \;\ge\; 3\beta - 2\beta^2 .
%\]
%
%\item[(iii)] The two meta-distributions obtained by drawing $p$ uniformly from 
%$\mathcal{D}_1$ (respectively $\mathcal{D}_2$) and then drawing $m$ i.i.d.\ samples from $p$
%satisfy
%\[
%\tv\!\left( \mathcal{D}_1^{\oplus m},\, \mathcal{D}_2^{\oplus m} \right) 
%\;\le\; \frac{1}{3} .
%\]
%\end{enumerate}
%
%Let $\beta<1$ and $m\in\mathbb{N}$; the proof follows by constructing $\Q=\{q_1,q_2\}$ and
%	two families of distributions~$\D_1,\D_2$ with the following properties:
\begin{itemize}
%	\item If $p\in \D_1$ then $\tv(q_2,p) > 3(1-\beta)\cdot\tv(q_1,p) + \beta^2$.
	\item If $p\in \D_1$ then $\tv(q_1,p)=\frac{1}{2}\beta$ and $\tv(q_2,p) > 3\cdot\frac{1}{2}\beta - 2\beta^2$.
%	\item If $p\in \D_2$ then $\tv(q_1,p) > 3(1-\beta)\cdot\tv(q_2,p) + \beta^2$.
	\item If $p\in \D_2$ then $\tv(q_2,p)=\frac{1}{2}\beta$ and $\tv(q_1,p) > 3\cdot\frac{1}{2}\beta - 2\beta^2$.
	\item $\tv(\D_1^{\oplus m},\D_2^{\oplus m}) \le 1/3$,
	where $\D_i^{\oplus m}$ denotes the distribution obtained by sampling $p$ uniformly from $\D_i$
	and then taking $m$ independent samples from $p$.
\end{itemize}
%Given that these properties hold, \Cref{thm:lbdensities} follows via \Cref{le:lecam}:
%	Indeed, we
To see how these 3 items conclude the proof of \Cref{thm:lbdensities}, consider the following game between an {\it adversary} and a {\it distinguisher}:
	the adversary randomly picks one of  $\D_1^{\oplus m},\D_2^{\oplus m}$, each with probability $1/2$,
	and draws a random sample $\vec x$ from it. Then, it shows $\vec x$ to the distinguisher, whose goal is to determine
	whether $\vec x$ was drawn from  $\D_1^{\oplus m}$ or from $\D_2^{\oplus m}$.

Now, by the first two properties, it follows that any (possibly randomized)
	proper learning algorithm for $\Q$ that uses an input sample of size $m$
	and outputs $q_i$ such that $\tv(q_i,p)\leq 3\opt -2\opt^2 $ with confidence $1-\delta$
	can be used by the distinguisher to guarantee
	a failing probability of at most $\delta$.
	However, since by Lemma \ref{le:lecam} any distinguisher fails with probability at least $1/2 - \tv(\D_1^{\oplus m},\D_2^{\oplus m})/2$,
	the third property implies that~$\delta \geq1/3$ as required.

%The third property implies that one cannot distinguish whether a random sample is drawn from $\D_1^{\oplus n}$ or from $\D_2^{\oplus n}$
%	with confidence $\geq 2/3 $ which (by the first two properties)
%	is an easier task than properly $\alpha$-learning $Q$ with $\alpha=3(1-\beta)$ and confidence $\delta\geq 2/3$.

\paragraph{Construction.}
We first define two reference densities $q_1,q_2$ over $[0,1]$ by
\[
q_1(x) =
\begin{cases}
1-\beta & x < 1/2,\\[2mm]
1+\beta & x \ge 1/2,
\end{cases}
\qquad\qquad
q_2(x) =
\begin{cases}
1+\beta & x < 1/2,\\[2mm]
1-\beta & x \ge 1/2,
\end{cases}
\]
and note that both integrate to $1$ and that $\tv(q_1,q_2)=\beta$.
See \Cref{fig:q12} for illustration.

\begin{figure}[t]
\centering
\begin{tikzpicture}[x=6cm, y=1.2cm]

% y-levels
\def\yzero{0}
\def\ylo{0.5}   % 1 - beta
\def\yhi{1.0}   % 1 + beta

%%%%%%%%%%%%%%%%%%%%%%%%%%%%%%%%%%%%%%%%%%%%%%%%%%%%%%
% Top panel: q1
%%%%%%%%%%%%%%%%%%%%%%%%%%%%%%%%%%%%%%%%%%%%%%%%%%%%%%

\node at (0.5, \yhi+0.5) {$q_1$};

% Left half: density 1 - beta
\draw[fill=black!5] (0,0) rectangle (0.5,\ylo);
\draw (0,0) rectangle (0.5,\ylo);

% Right half: density 1 + beta
\draw[fill=black!20] (0.5,0) rectangle (1,\yhi);
\draw (0.5,0) rectangle (1,\yhi);

% x-axis
\draw[->] (0,-0.15) -- (1.05,-0.15) node[right]{\scriptsize $x$};
\draw (0,-0.15) node[below]{\scriptsize $0$};
\draw (0.5,-0.15) node[below]{\scriptsize $\tfrac12$};
\draw (1,-0.15) node[below]{\scriptsize $1$};

% y-axis ticks (schematic)
\draw (-0.03,\yzero) node[left]{\scriptsize $0$};
\draw (-0.03,\ylo)   node[left]{\scriptsize $1-\beta$};
\draw (-0.03,\yhi)   node[left]{\scriptsize $1+\beta$};

%%%%%%%%%%%%%%%%%%%%%%%%%%%%%%%%%%%%%%%%%%%%%%%%%%%%%%
% Bottom panel: q2
%%%%%%%%%%%%%%%%%%%%%%%%%%%%%%%%%%%%%%%%%%%%%%%%%%%%%%

\begin{scope}[yshift=-3cm]

\node at (0.5, \yhi+0.5) {$q_2$};

% Left half: density 1 + beta
\draw[fill=black!20] (0,0) rectangle (0.5,\yhi);
\draw (0,0) rectangle (0.5,\yhi);

% Right half: density 1 - beta
\draw[fill=black!5] (0.5,0) rectangle (1,\ylo);
\draw (0.5,0) rectangle (1,\ylo);

% x-axis
\draw[->] (0,-0.15) -- (1.05,-0.15) node[right]{\scriptsize $x$};
\draw (0,-0.15) node[below]{\scriptsize $0$};
\draw (0.5,-0.15) node[below]{\scriptsize $\tfrac12$};
\draw (1,-0.15) node[below]{\scriptsize $1$};

% y-axis ticks
\draw (-0.03,\yzero) node[left]{\scriptsize $0$};
\draw (-0.03,\ylo)   node[left]{\scriptsize $1-\beta$};
\draw (-0.03,\yhi)   node[left]{\scriptsize $1+\beta$};

\end{scope}
\end{tikzpicture}

\caption{Densities $q_1$ (top) and $q_2$ (bottom).  
The left half of $[0,1]$ has density $1-\beta$ in $q_1$ and $1+\beta$ in $q_2$, 
while the right half has the opposite pattern.}
\label{fig:q12}
\end{figure}

Next we introduce the families $\mathcal{D}_1,\mathcal{D}_2$.
Let $N$ be a large integer to be chosen later, and define
$k = \frac{1+\beta}{\beta}$. For simplicity (and without loss of generality), assume $k$ is an integer.
Partition $[0,1]$ into $2N$ intervals
\[
I_1,\dots,I_{2N},
\qquad |I_j|=\frac{1}{2N},
\]
and further partition each $I_j$ into $k$ equal subintervals
\[
I(j,1),\,I(j,2),\,\dots,\,I(j,k),
\qquad |I(j,\ell)|=\frac{1}{2Nk}.
\]

A distribution $p$ in either $\mathcal{D}_1$ or $\mathcal{D}_2$ is specified by choosing,
for each $j\in[2N]$, exactly one ``special'' small interval $I(j,\ell(j))$ with
$\ell(j)\in[k]$.
Different choices of the index function $\ell : [2N]\to[k]$ correspond to different
densities. (See \Cref{fig:D12} for Illustration.)

\smallskip
\noindent
\textbf{Definition of $\mathcal{D}_1$.}
Given a choice of $\ell(j)\in[k]$ for each $j\in[2N]$, the density $p$ is defined as follows.

\begin{itemize}
\item
For $j=1,\dots,N$ (first half):
\begin{align*}
p(x) &=
\begin{cases}
2, & x \in I(j,\ell(j)),\\[1mm]
1-\beta, & x \in I(j,\ell)\ \text{with }\ell\neq \ell(j).
\end{cases}
\end{align*}

\item
For $j=N+1,\dots,2N$ (second half):
\begin{align*}
p(x) &=
\begin{cases}
0, & x \in I(j,\ell(j)),\\[1mm]
1+\beta, & x \in I(j,\ell)\ \text{with }\ell\neq \ell(j).
\end{cases}
\end{align*}
\end{itemize}

Let $\mathcal{D}_1$ be the set of all densities obtained in this manner as $\ell$ ranges over
$[k]^{2N}$.

\smallskip
\noindent
\textbf{Definition of $\mathcal{D}_2$.}
This family is defined analogously but with the roles of ``high'' and ``low'' densities swapped
between the two halves.

For a choice of $\ell(j)\in[k]$, define $p$ by:

\begin{itemize}
\item
For $j=1,\dots,N$:
\begin{align*}
p(x) &=
\begin{cases}
0, & x \in I(j,\ell(j)),\\[1mm]
1+\beta, & x \in I(j,\ell)\ \text{with }\ell\neq \ell(j).
\end{cases}
\end{align*}

\item
For $j=N+1,\dots,2N$:
\begin{align*}
p(x) &=
\begin{cases}
2, & x \in I(j,\ell(j)),\\[1mm]
1-\beta, & x \in I(j,\ell)\ \text{with }\ell\neq \ell(j).
\end{cases}
\end{align*}
\end{itemize}

Let $\mathcal{D}_2$ be the set of all such densities.

%Finally, set
%\[
%q_1(x) =
%\begin{cases}
%1-\beta & x\leq 1/2\\
%1+\beta& x>1/2
%\end{cases}
%,~
%q_2(x) =
%\begin{cases}
%1+\beta & x\leq 1/2\\
%1-\beta& x>1/2,
%\end{cases}
%\]

%\begin{figure}
%\begin{center}
%\includegraphics[width=.75\textwidth]{Slide1.jpg}
%\end{center}
%\caption{An illustration of $q_1$ (left) and $q_2$ (right).}
%\label{fig:q12}
%\end{figure}

%In order to define $\D_1,\D_2$, pick a large integer $N=N(m)$ (to be determined later).
%Partition the unit interval into~$2N$ intervals $I_1,\ldots, I_{2N}$ of size $1/2N$ each.
%$\D_1$ is the family of distributions $p$ of the following form:
%let $R\subseteq [N]$ be a set of size $k=N\frac{\beta}{2(1+\beta)}$.
%Set (see \Cref{fig:D12} for illustration.)
%\[
%p(x) =
%\begin{cases}
%1-\beta & x\in  I_j,  j\notin R, j\leq N,\\
%2 & x\in  I_j,  j\in R, j\leq N,\\
%1+\beta & x\in  I_j,  j+N\notin R, j > N,\\
%0 & x\in  I_j,  j+N\in R, j\leq N.\\
%\end{cases}
%\]
%$\D_2$ is defined analogously as the family of distributions $p$ of the form
%\[
%p(x) =
%\begin{cases}
%1+\beta & x\in  I_j,  j\notin R, j\leq N,\\
%0 & x\in  I_j,  j\in R, j\leq N,\\
%1-\beta & x\in  I_j,  j+N\notin R, j > N,\\
%2 & x\in  I_j,  j+N\in R, j\leq N.\\
%\end{cases}
%\]

\begin{figure}[t]
\centering
\begin{tikzpicture}[x=0.5cm,y=1.2cm,scale=0.9]

% parameters for the schematic
\def\Nb{4}   % number of big intervals per half (schematic)
\def\kb{4}   % number of small intervals per big interval (schematic)
\def\BW{1.2} % width of a big interval
\pgfmathsetmacro{\SW}{\BW/\kb}  % small interval width

% y-levels for densities (schematic, only relative)
\def\yzero{0.0}
\def\ylo{0.5}   % 1 - beta
\def\yhi{1.0}   % 1 + beta
\def\yspike{1.6}% 2

%%%%%%%%%%%%%%%%%%%%%%%%%%%%%%%%%%%%%%%%%%%%%%%%%%%%
% Helper: draw one big interval with small intervals and a special one
%%%%%%%%%%%%%%%%%%%%%%%%%%%%%%%%%%%%%%%%%%%%%%%%%%%%

% #1 = left x of big interval
% #2 = baseline height
% #3 = special height
% #4 = label (for clarity, unused)
% #5 = choose special index in {0,...,kb-1}
\newcommand{\BigInterval}[5]{%
  \foreach \i in {0,...,\numexpr\kb-1} {%
    \pgfmathsetmacro{\xL}{#1 + \i*\SW}
    \pgfmathsetmacro{\xR}{#1 + (\i+1)*\SW}
    \ifnum\i=#5
      % special small interval
      \draw[fill=black!20] (\xL,0) rectangle (\xR,#3);
    \else
      % non-special small interval
      \draw[fill=black!5] (\xL,0) rectangle (\xR,#2);
    \fi
  }
  % outline of big interval
  \draw[thick] (#1,0) rectangle (#1+\BW,0);
}

%%%%%%%%%%%%%%%%%%%%%%%%%%%%%%%%%%%%%%%%%%%%%%%%%%%%
% Axis annotations (densities)
%%%%%%%%%%%%%%%%%%%%%%%%%%%%%%%%%%%%%%%%%%%%%%%%%%%%

% left y-axis for the top panel
\draw[->] (-1,0) -- (-1,\yspike+0.3);
\draw (-1,\yzero) node[left] {\scriptsize $0$};
\draw (-1,\ylo)   node[left] {\scriptsize $1-\beta$};
\draw (-1,\yhi)   node[left] {\scriptsize $1+\beta$};
\draw (-1,\yspike)node[left] {\scriptsize $2$};

%%%%%%%%%%%%%%%%%%%%%%%%%%%%%%%%%%%%%%%%%%%%%%%%%%%%
% Top panel: p in D1
%%%%%%%%%%%%%%%%%%%%%%%%%%%%%%%%%%%%%%%%%%%%%%%%%%%%

\node at (11, \yspike-1) {$p \in \mathcal{D}_1$};

% left half: j = 1,...,N, special interval height = 2, baseline = 1 - beta
\foreach \j in {0,...,\numexpr\Nb-1} {%
  \pgfmathsetmacro{\xstart}{\j*\BW}
  % choose some special index (schematically \j mod k)
  \pgfmathtruncatemacro{\spec}{mod(\j,\kb)}
  \BigInterval{\xstart}{\ylo}{\yspike}{}{ \spec }
}

% right half: j = N+1,...,2N, special interval height = 0, baseline = 1 + beta
\foreach \j in {0,...,\numexpr\Nb-1} {%
  \pgfmathsetmacro{\xstart}{(\Nb+\j)*\BW}
  \pgfmathtruncatemacro{\spec}{mod(\j,\kb)}
  % here baseline is 1 + beta, special is 0 (we indicate with white bar)
  \foreach \i in {0,...,\numexpr\kb-1} {%
    \pgfmathsetmacro{\xL}{\xstart + \i*\SW}
    \pgfmathsetmacro{\xR}{\xstart + (\i+1)*\SW}
    \ifnum\i=\spec
      \draw[fill=white] (\xL,0) rectangle (\xR,\yzero);
    \else
      \draw[fill=black!5] (\xL,0) rectangle (\xR,\yhi);
    \fi
  }
  \draw[thick] (\xstart,0) rectangle (\xstart+\BW,0);
}

% braces and labels for halves
\draw[decorate,decoration={brace,amplitude=4pt,mirror}]
      (0,-0.25) -- (\Nb*\BW,-0.25) node[midway,below=4pt] {\scriptsize left half of $[0,1]$};
\draw[decorate,decoration={brace,amplitude=4pt,mirror}]
      (\Nb*\BW,-0.25) -- (2*\Nb*\BW,-0.25) node[midway,below=4pt] {\scriptsize right half of $[0,1]$};

%%%%%%%%%%%%%%%%%%%%%%%%%%%%%%%%%%%%%%%%%%%%%%%%%%%%
% Bottom panel: p in D2 (shifted down)
%%%%%%%%%%%%%%%%%%%%%%%%%%%%%%%%%%%%%%%%%%%%%%%%%%%%

\begin{scope}[yshift=-3.2cm]

% axis
\draw[->] (-1,0) -- (-1,\yspike+0.3);
\draw (-1,\yzero) node[left] {\scriptsize $0$};
\draw (-1,\ylo)   node[left] {\scriptsize $1-\beta$};
\draw (-1,\yhi)   node[left] {\scriptsize $1+\beta$};
\draw (-1,\yspike)node[left] {\scriptsize $2$};

\node at (11, \yspike-0.75) {$p \in \mathcal{D}_2$};

% left half: special at 0, baseline 1 + beta
\foreach \j in {0,...,\numexpr\Nb-1} {%
  \pgfmathsetmacro{\xstart}{\j*\BW}
  \pgfmathtruncatemacro{\spec}{mod(\j,\kb)}
  \foreach \i in {0,...,\numexpr\kb-1} {%
    \pgfmathsetmacro{\xL}{\xstart + \i*\SW}
    \pgfmathsetmacro{\xR}{\xstart + (\i+1)*\SW}
    \ifnum\i=\spec
      \draw[fill=white] (\xL,0) rectangle (\xR,\yzero);
    \else
      \draw[fill=black!5] (\xL,0) rectangle (\xR,\yhi);
    \fi
  }
  \draw[thick] (\xstart,0) rectangle (\xstart+\BW,0);
}

% right half: special at 2, baseline 1 - beta
\foreach \j in {0,...,\numexpr\Nb-1} {%
  \pgfmathsetmacro{\xstart}{(\Nb+\j)*\BW}
  \pgfmathtruncatemacro{\spec}{mod(\j,\kb)}
  \foreach \i in {0,...,\numexpr\kb-1} {%
    \pgfmathsetmacro{\xL}{\xstart + \i*\SW}
    \pgfmathsetmacro{\xR}{\xstart + (\i+1)*\SW}
    \ifnum\i=\spec
      \draw[fill=black!20] (\xL,0) rectangle (\xR,\yspike);
    \else
      \draw[fill=black!5] (\xL,0) rectangle (\xR,\ylo);
    \fi
  }
  \draw[thick] (\xstart,0) rectangle (\xstart+\BW,0);
}

\draw[decorate,decoration={brace,amplitude=4pt,mirror}]
      (0,-0.25) -- (\Nb*\BW,-0.25) node[midway,below=4pt] {\scriptsize left half of $[0,1]$};
\draw[decorate,decoration={brace,amplitude=4pt,mirror}]
      (\Nb*\BW,-0.25) -- (2*\Nb*\BW,-0.25) node[midway,below=4pt] {\scriptsize right half of $[0,1]$};

\end{scope}

\end{tikzpicture}
\caption{Schematic illustration of the distributions in $\mathcal{D}_1$ (top) and $\mathcal{D}_2$ (bottom).
In each big interval (a block on the $x$-axis), the density is piecewise constant on $k$ equal small intervals.
For $p\in\mathcal{D}_1$, on the left half of $[0,1]$ each big interval has density $1-\beta$ except on one
small interval where the density is $2$, while on the right half it has density $1+\beta$ except on one
small interval where the density is $0$.  For $p\in\mathcal{D}_2$ the pattern is reversed.}
\label{fig:D12}
\end{figure}

%\begin{figure}
%\begin{center}
%\includegraphics[width=.75\textwidth]{Slide2.jpg}
%\end{center}
%\caption{An illustration of a distribution drawn from $\D_1$ (left) and of a distribution drawn from $\D_2$ (right).}
%\label{fig:D12}
%\end{figure}

The next claim, which follows from a trivial caclulation, yields the first two items.
\begin{claim}
Let $i,j\in\{0,1\}$ and let $p_j\in \D_j$. Then,
\[\tv(q_i,p_j) = \begin{cases}
\frac{1}{2}\beta  &i=j,\\
\frac{3}{2}\beta - \frac{2\beta^2}{1+\beta}> 3\cdot\frac{1}{2}\beta-2\beta^2 &i\neq j.
\end{cases}\]
%
%
%
%For every $p_1\in \D_1$:
%\[
%\tv(q_1,p_1)= \frac{1}{2}\beta  ~~\text{ and }~~  \tv(q_2,p_1)> 3\cdot\frac{1}{2}\beta-2\beta^2.
%\]

%\[
%\tv(q_1,p_1)=\frac{1}{2k}(1+\beta) = \frac{1}{2}\beta  ~~\text{ and }~~  \tv(q_2,p_1)= 3\cdot\frac{1}{2}\beta - \frac{2\beta^2}{1+\beta}> 3\cdot\frac{1}{2}\beta-2\beta^2.
%\]
%
%
%
%\[
%\tv(q_1,p_1)=\beta  ~~\text{ and }~~  \tv(q_2,p_1)= 3\beta - \frac{2\beta^2}{1+\beta}> 3\beta-2\beta^2 = 3(1-\beta)\tv(q_1,p_1) + \beta^2.
%\]
%Similarly,  for every $p_2\in \D_2$:
%\[
%\tv(q_2,p_2)= \frac{1}{2}\beta ~~\text{ and }~~ \tv(q_1,p_2) >  3\cdot\frac{1}{2}\beta-2\beta^2.\]
\end{claim}
%By setting $\eta=1-\frac{\alpha}{3}$ (equivalently $\alpha=3-3\eta$) it follows that $\beta \leq \eta^2$ (since $\beta\leq(1-\frac{\alpha}{3})^2$ by assumption);

%(less than $\frac{(3-\alpha)}{3}$ suffices).
The third item follows from the next claim.
\begin{claim}\label{clm:uniform-conditioning}
Let $\mathsf{D}\in\{\mathcal{D}_1^{\oplus m},\mathcal{D}_2^{\oplus m}\}$, and let
$E$ denote the event that every big interval $I_j$ contains at most one sample.
Then the conditional distribution of the $m$ samples under $\mathsf{D}$ coincides
with the conditional distribution of $m$ i.i.d.\ uniform samples on $[0,1]$
given~$E$; that is,
\[
  \mathsf{D} \mid E \;=\; U^m \mid E,
\]
where $U$ denotes the uniform distribution on $[0,1]$.
\end{claim}

\begin{proof}
Recall that a draw from $\mathcal{D}_1$ or $\mathcal{D}_2$ is generated by first
choosing, for each big interval $I_j$, a ``special'' small subinterval
$I(j,\ell(j))$, where $\ell(j)$ is drawn uniformly from $\{1,\ldots,k\}$,
independently across $j$, and then defining the density $p$ by assigning value
$2$ (or $0$) on $I(j,\ell(j))$ and value $1\pm\beta$ on the other small
subintervals inside $I_j$.  Thus, when $p$ is drawn uniformly from
$\mathcal{D}_1$ or $\mathcal{D}_2$, the special small interval in each big
interval is uniformly random.

By construction, for every $p\in\mathcal{D}_1\cup\mathcal{D}_2$ the measure
of each big interval $I_j$ is exactly $1/(2N)$.  Therefore, under~$\mathsf{D}$, 
the multiset of big intervals containing the $m$ samples has the
same distribution as under $U^m$.  Conditioning on the event $E$ (that no big
interval contains more than one sample) simply amounts to conditioning on the
event that these $m$ big intervals are distinct, so the set of occupied big
intervals is a uniformly random $m$-subset of $\{I_1,\dots,I_{2N}\}$ under both
$\mathsf{D}$ and $U^m$.

Now fix a big interval $I_j$.  Conditional on the event that a sample $X$ falls
in $I_j$, its location inside $I_j$ under $\mathsf{D}$ is determined as
follows: first a special small subinterval $I(j,\ell(j))$ is chosen uniformly
at random, then $X$ is drawn from the corresponding density, which assigns
value $2$ (or $0$) on $I(j,\ell(j))$ and $1\pm\beta$ on the other $k-1$ small
subintervals.  Averaging over the uniform choice of $\ell(j)$, these densities
average out to the constant value~$1$ on $I_j$, so $X$ is uniformly distributed
over $I_j$ given $X\in I_j$.  In particular, once the set of occupied big
intervals is fixed, under $\mathsf{D}$ we obtain one independent uniform point
in each of these intervals.

Putting the two steps together, a sample from $\mathsf{D}\mid E$ is generated
by (i) choosing a uniformly random $m$-subset of the big intervals and (ii)
drawing one uniform point from each chosen big interval.  This is exactly the
procedure that generates $U^m$ conditioned on~$E$.  Hence
$\mathsf{D}\mid E = U^m\mid E$, as claimed.
\end{proof}

The previous claim implies that 
\[
(\mathcal{D}_1^{\oplus m})\mid E
\;=\;
(\mathcal{D}_2^{\oplus m})\mid E,
\]
and therefore
\[
\tv\!\left((\mathcal{D}_1^{\oplus m})\mid E,\,
          (\mathcal{D}_2^{\oplus m})\mid E\right)=0.
\]
It remains to lower bound the probability of $E$ under both
$\mathcal{D}_1^{\oplus m}$ and $\mathcal{D}_2^{\oplus m}$.
Since each big interval $I_j$ has mass exactly $1/(2N)$ under every
$p\in\mathcal{D}_1\cup\mathcal{D}_2$, a sequence of $m$ i.i.d.\ samples
falls in $m$ distinct big intervals with probability
\[
  \Pr(E)
  \;=\;
  \Bigl(1-\frac{1}{2N}\Bigr)
  \Bigl(1-\frac{2}{2N}\Bigr)
  \cdots
  \Bigl(1-\frac{m-1}{2N}\Bigr)
  \;\approx\;
  \exp\!\Bigl(-\frac{m^2}{2N}\Bigr).
\]
Thus, by choosing $N$ sufficiently larger than $m^2$ (for instance,
$N = C m^2$ for a large enough constant $C$), the probability of~$E$
is at least $11/12$ under both meta-distributions.
Lemma~\ref{le:chain} then yields
\[
  \tv(\mathcal{D}_1^{\oplus m},\mathcal{D}_2^{\oplus m})
  \;\le\; \frac13.
\]

This completes the proof of Theorem~\ref{thm:lbdensities}.
The proof of Theorem~\ref{thm:lbfinite} is analogous.
In the finite-domain setting, we consider the domain
$[2N]\times[k]$ and identify each small interval $I(j,\ell)$ with the
pair $(j,\ell)$.  The distributions $q_1$ and $q_2$ are defined by
\[
q_1(j,\ell)=
\begin{cases}
\frac{1-\eta}{2Nk}, & j\le N,\\[1mm]
\frac{1+\eta}{2Nk}, & j>N,
\end{cases}
\qquad
q_2(j,\ell)=
\begin{cases}
\frac{1+\eta}{2Nk}, & j\le N,\\[1mm]
\frac{1-\eta}{2Nk}, & j>N,
\end{cases}
\]
and the families $\mathcal{D}_1$ and $\mathcal{D}_2$ are defined exactly
as in the continuous case by modifying, in each $j$, exactly one of the
$k$ points $(j,1),\dots,(j,k)$.  The same symmetry and collision
arguments apply verbatim.

\end{proof}

\bibliographystyle{abbrvnat}
\bibliography{distlearn}

%The bound $\tilde O(\frac{\sqrt{\log\lvert X\rvert}\log n}{\eps^{7/2}})$ is efficient
%in the sense that its polynomial in the input length (i.e.\ polynomial in $\log n$ and $\log\lvert X\rvert$).
%The bound $O(\frac{\sqrt{n}}{\eps^{5/2}}\log(1/\delta))$ is better when $\lvert X\rvert$ is very large (possibly infinite).

\newpage

\appendix

\section{Proof of \Cref{lem:polyt}}\label{sec:lempolyt}
\begin{proof}
The desired equality hinges on the Minimax Theorem:
\begin{align*}
\min_{v\in\Q_{\tv}} h\cdot v =\min_{v\in\Q_{\tv}} \sum_{i}h_iv_i &= \min_{p\in \Delta(\X)}\sum_{i}h_i\tv(p,q_i)\\
					  &= \min_{p\in \Delta(\X)}\max_{f_i:X\to[0,1]}\sum_i h_i\bigl(\Ex_{p}[f_i] - \Ex_{q_i}[f_i]\bigr)\\
					  &= \max_{f_i:\X\to[0,1]}\min_{p\in \Delta(\X)}\sum_i h_i\bigl(\Ex_{p}[f_i] - \Ex_{q_i}[f_i]\bigr)\tag{by the Minimax Theorem~\cite{Neumann1928}}\\
					  &=\max_{f_i\in\conv(\F_i)}\min_{p\in \Delta(\X)}\sum_i h_i\bigl(\Ex_{p}[f_i] - \Ex_{q_i}[f_i]\bigr) \tag{this is the technical part that is derived below}\\
					  &=\min_{p\in \Delta(\X)}\max_{f_i\in\conv(\F_i)}\sum_i h_i\bigl(\Ex_{p}[f_i] - \Ex_{q_i}[f_i]\bigr)\tag{by the Minimax Theorem}\\
					  &=\min_{p\in \Delta(\X)}\max_{f_i\in\F_i}\sum_i h_i\bigl(\Ex_{p}[f_i] - \Ex_{q_i}[f_i]\bigr)\tag{a linear function over a convex set is maximized at a vertex}\\
					  &= \min_{p\in \Delta(\X)}\sum_{i}h_id_{\F_i}(p,q_i)\leq h_id_{\F}(p,q_i) = \min_{v\in\Q_{\F}} h\cdot v.
\end{align*}
%where the last inequality follows because the maximum of a linear function over a convex set (i.e.\ $\conv(\F)^n$)
%is attained by a vertex of the convex set (i.e.\ by some element of $\F^n$, which is a list of functions $f_i$ in $\F$).

We next turn to prove the main inequality:
\[ \max_{f_i:\X\to[0,1]}\min_{p\in \Delta(\X)}\sum_i h_i\bigl(\Ex_{p}[f_i] - \Ex_{q_i}[f_i]\bigr) = \max_{f_i\in\conv(\F_i)}\min_{p\in \Delta(\X)}\sum_i h_i\bigl(\Ex_{p}[f_i] - \Ex_{q_i}[f_i]\bigr).\]
First, note that the direction ``$\geq$'' is trivial since in the left-hand-side the maximum is not restricted to $f_i\in \conv(F)$.
The other direction follows by analyzing the $f_i$'s that maximize the program
\begin{equation}\label{eq:program}
\max_{f_i:\X\to[0,1]}\min_{p\in \Delta(\X)}\sum_i h_i\bigl(\Ex_{p}[f_i] - \Ex_{q_i}[f_i]\bigr).
\end{equation}
Let us first write the objective $T(f_i)=T(f_1,\ldots,f_n):=\min_{p\in \Delta(\X)}\sum_i h_i(\Ex_{p}[f_i] - \Ex_{q_i}[f_i])$ more explicitly:
\begin{align*}
T(f_i)&=\min_{p\in \Delta(\X)} \sum_i h_i\bigl(\Ex_{p}[f_i] - \Ex_{q_i}[f_i]\bigr) \\
        &=  \min_{p\in  \Delta(\X)}\Bigl(\sum_x p(x)\sum_i h_if_i(x) - \sum_x\sum_i q_i(x)h_if_i(x)\Bigr) \\
        &= \min_{p\in \Delta(\X)}\Bigl(\sum_x p(x)\sum_i h_if_i(x)\Bigr) - \sum_x\sum_i q_i(x)h_if_i(x).
\end{align*}
We want to show that there exists a maximizer $f_i^*$ of $T(f_i)$ such that $f_i^*\in \conv(\F_i)$.
To see this, it will be more convenient to express $T(f_i)$ in the following maximization form:
\begin{claim}\label{c:lp}
For every choice of the $f_i$'s the function $T(f_i)$ equals to the value of the following linear program in the variable $\lambda\in\R$:
\begin{align}
&\max_{\lambda}~~~ \lambda - \sum_x\sum_i h_i f_i(x) q_i(x)  \nonumber\\
&\text{subject to}~~~  \lambda \leq \sum_i h_if_i(x),~~~ \forall x\in \X \nonumber.
\end{align}
\end{claim}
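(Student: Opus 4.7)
The plan is to reduce the claim to the standard LP duality between ``minimum of a linear function over the simplex'' and ``maximum scalar lower bound''. Write $g(x) := \sum_i h_i f_i(x)$ and let $c := \sum_x \sum_i h_i f_i(x) q_i(x)$, which is a constant independent of $p$. Then
\[
T(f_i) = \min_{p \in \Delta(\X)} \sum_x p(x) g(x) \;-\; c,
\]
so it suffices to identify $\min_{p \in \Delta(\X)} \sum_x p(x) g(x)$ with the maximum of $\lambda$ subject to $\lambda \leq g(x)$ for all $x$.

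First I would observe that $p \mapsto \sum_x p(x) g(x) = \Ex_p[g]$ is a linear functional over the convex set $\Delta(\X)$, and hence its minimum is attained at (or approached by) Dirac masses. Concretely, $\Ex_p[g] \geq \inf_{x \in \X} g(x)$ for every $p$, while taking $p = \delta_{x_0}$ for $x_0$ with $g(x_0)$ arbitrarily close to $\inf_x g(x)$ shows the reverse inequality. Thus
\[
\min_{p \in \Delta(\X)} \sum_x p(x) g(x) \;=\; \inf_{x \in \X} g(x).
\]

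Next I would rewrite this infimum via the standard ``scalar lower bound'' trick: any $\lambda$ satisfying $\lambda \leq g(x)$ for every $x$ is at most $\inf_x g(x)$, and conversely $\lambda = \inf_x g(x)$ is feasible (or is the supremum of feasible $\lambda$'s when the infimum is not attained). Hence
\[
\inf_{x \in \X} g(x) \;=\; \max\bigl\{\lambda \in \R \,:\, \lambda \leq g(x) \text{ for all } x \in \X\bigr\}.
\]
Subtracting the constant $c$ from both sides and plugging in the definitions of $g$ and $c$ yields precisely the linear program in the claim.

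There is no real obstacle here; the argument is essentially the observation that the optimal vertex of the simplex for a linear objective is a point mass, together with the elementary reformulation of a pointwise infimum as the optimal value of a one-variable LP. The only mild subtlety is the possibility that $\X$ is infinite and the infimum is not attained, which I would handle by the approximation argument with Dirac masses described above.
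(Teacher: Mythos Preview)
Your proposal is correct and follows essentially the same route as the paper: both arguments show that each side equals $\min_x\sum_i h_i f_i(x) - \sum_x\sum_i q_i(x)h_if_i(x)$, using that a linear functional over the simplex is minimized at a Dirac mass and that the LP's optimal $\lambda$ is the pointwise infimum of $\sum_i h_if_i(x)$. Your version is slightly more explicit about the infinite-$\X$ case, but otherwise the two proofs are the same.
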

\begin{proof}
We show that both $T(f_i)$ and the value of the above program are equal to
\[ \min_x \bigl(h_if_i(x)\bigr) - \sum_x\sum_i h_i f_i(x) q_i(x).\]
Indeed, for the linear program it follows directly from its definition.

To derive it also for $T(f_i)$,
recall that we already established that
\[T(f_i) =\min_{p\in \Delta(\X)}\Bigl(\sum_x p(x)\sum_i h_if_i(x)\Bigr) - \sum_x\sum_i q_i(x)h_if_i(x), \]
Thus, its value is obtained by distributions $p^*$
that minimize $\sum_x p(x)\sum_i h_if_i(x)$.
Clearly, $p^*$ minimizes this sum if it concentrates all its weight
on the $x$'s that minimizes $\sum_i h_if_i(x)$,
and therefore $T(f_i) = \min_x h_if_i(x) - \sum_x\sum_i q_i(x)h_if_i(x)$,
as required.
\end{proof}

By Claim~\ref{c:lp} it suffices to show that there are $f_i^*\in\conv(\F)$ that maximize the following linear program
\begin{align}
&\max_{f_i, \lambda}~~~ \lambda - \sum_x\sum_i q_i(x)h_i f_i(x)  \label{eq:program}\\
&\text{subject to}~~~  \lambda \leq \sum_i h_if_i(x),~~~ \forall x\in \X \nonumber\\
&\text{and to}~~~ f_i:\X\to[0,1],~~~ \forall i\leq n. \nonumber
\end{align}
Note that since the maximization is over both $\lambda$ and the $f_i$'s then
we can first maximize over the $f_i$'s (keeping $\lambda$ fixed), and then optimize over $\lambda$.
In other words, it suffices to show that for a fixed $\lambda$, the optimal $f_i$'s satisfy $f_i\in \conv(\F_i)$.
Since $\lambda$ is fixed, we can consider the simpler objective of
\begin{align*}
\min_{f_i}~~~ \sum_x\sum_i q_i(x)h_i f_i(x) .
\end{align*}
Since the constraints over different $x$'s are independent,
we can optimize each $f_i(x)$ point-wise by solving:
\begin{align*}
&\min_{f_i}~~~\sum_i q_i(x)h_i f_i(x) \\
&\text{subject to}~~~  \sum_i h_if_i(x) \geq \lambda , \nonumber\\
&\text{and to}~~~ f_i(x)\in[0,1],~~~ \forall i\leq n. \nonumber
\end{align*}
The latter form is easier to handle:
Sort the $q_i(x)$ according to their values;
for simplicity and without loss of generality
assume that $q_1(x)\leq q_2(x)\leq\ldots q_n(x)$.
We claim that an optimal solution can be obtained
by traversing the $i$ from $1$ to $n$,
and setting the corresponding $f_i$ to as large as possible
until feasibility is achieved (i.e.\ until $\sum_i h_if_i(x) = \lambda$).
More formally, the following solution is optimal:
\[
f_i(x) =
\begin{cases}
1 &\sum_{j \leq  i} h_j < \lambda, \\
\frac{\lambda -  \sum_{j < i} h_j}{h_i} &\sum_{j <  i} h_j < \lambda \text{ and } \sum_{j < i} h_j  + h_i\geq \lambda,\\
0 &\text{otherwise.}
\end{cases}
\]
Indeed, else there would be some $i$ with $\sum_{j <  i} h_j > \lambda$ for which $q_i(x) > 0$,
and we could decrease $f_i(x)$ to $0$ and increase $f_j(x)$ for for some $j$'s with $j<i$,
which could only improve (decrease) the objective.

The proof is finished by noticing that
\[f_i(x) = 1_{\sum_{j\neq i} h_jA_{i,j}(x) < \lambda } ~~~~\text{ or } ~~~~ f_i(x) = t\cdot1_{\sum_{j\neq i} h_jA_{i,j}(x) < \lambda},\]
for $t= \frac{\lambda -  \sum_{j < i} h_j}{h_i} \leq 1$,
and in either way $f_i\in\conv(\F_i)$ (note that indeed $t\cdot 1_{\sum_j h_jA_{i,j}(x) \leq \lambda^*}$ is in $\conv(\F_i)$
since it is a convex combination of $1_{\sum_j h_jA_{i,j}(x) \leq \lambda^*}$ and the all-zeros function,
which are both in $\F_i$).

\end{proof}

\end{document}